\documentclass[letterpaper]{article} 
\usepackage{aaai2026}  
\usepackage{times}  
\usepackage{helvet}  
\usepackage{courier}  
\usepackage[hyphens]{url}  
\usepackage{graphicx} 
\usepackage{natbib}  
\usepackage{caption} 
\usepackage{fontawesome5}
\definecolor{iconC}{HTML}{2B2B2B}
\newcommand{\levelicon}[1]{\raisebox{-0.55\height}{\textcolor{iconC}{\fontsize{18pt}{18pt}\selectfont #1}}}

\usepackage{enumerate}

\usepackage{stmaryrd}

\usepackage{csquotes}

\usepackage{amsmath,amsthm}
\usepackage{amsfonts}

\usepackage{thmtools,thm-restate}
\usepackage{tikz}
\usetikzlibrary{positioning}

\usepackage[most]{tcolorbox}
\usepackage{array}

\usepackage{booktabs, xcolor, colortbl, tabularx}

\newtcolorbox{policybox}{
  colback=gray!10,
  colframe=gray!40,
  boxrule=0.5pt,
  arc=2pt,
  left=6pt,
  right=6pt,
  top=6pt,
  bottom=6pt
}

\newtcolorbox{examplebox}{
  colback=blue!10,
  colframe=blue!40,
  boxrule=0.5pt,
  arc=2pt,
  left=6pt,
  right=6pt,
  top=6pt,
  bottom=6pt
}

\usepackage{xcolor,colortbl}

\newtheorem{theorem}{Theorem}

\theoremstyle{definition}
\newtheorem{definition}{Definition}
\newtheorem{counterexample}{Counterexample}
\usepackage{cleveref}

\title{Explainable AI Isn't Enough! Rethinking Algorithmic Contestability}

\author{
    Timo Freiesleben\textsuperscript{\rm 1,2,3},
    Kristof Meding\textsuperscript{\rm 4,5},
    Gunnar König\textsuperscript{\rm 4,6}
}

\affiliations{
    \textsuperscript{\rm 1}LMU Munich, Germany\\
    \textsuperscript{\rm 2}Munich Center for Machine Learning (MCML)\\
    \textsuperscript{\rm 3}Munich Center for Mathematical Philosophy (MCMP)\\ 
    \textsuperscript{\rm 4}University of Tübingen, Germany\\
        \textsuperscript{\rm 5}
    CZS Institute for Artificial Intelligence and Law\\
    \textsuperscript{\rm 6}T\"ubingen AI Center\\
    timo.freiesleben@lmu.de
}

\begin{document}

\maketitle

\begin{abstract}
Machine learning systems increasingly make life-changing decisions about individuals, such as loan approvals, hiring, and cheating detection, raising a pressing question: how can individuals respond to negative decisions made by these opaque systems? While explainable artificial intelligence (XAI) has largely focused on algorithmic recourse—helping individuals change their features to obtain a desired outcome—the parallel problem of algorithmic contestability—helping individuals review and correct erroneous algorithmic decisions—has received far less attention, despite its central ethical and legal importance. We trace this neglect to the absence of clear formal definitions and a systematic operationalization of contestability as an algorithmic problem. To address it, we propose an operational definition of contestability as a natural complement to recourse: contestability starts from the presumption that a decision may be incorrect and focuses on identifying evidence to challenge and potentially overturn it, whereas recourse assumes the decision is valid and instead provides pathways for changing it. We show that standard XAI explanations, such as counterfactuals, LIME, or Anchors, even when combined with human intuitions about decision continuity or monotonicity, reveal only errors in the neighborhood of the individual, but provide insufficient grounds for overturning the decision at hand. Going thus beyond traditional XAI, we identify three types of evidence warranting reversal according to the decision maker’s own ethical standards: predictive multiplicity, incorrect feature values, and neglected overruling evidence. We argue that these render decisions normatively indefensible and thus successfully contestable. Finally, we analyze how existing EU legislation connects to our framework and argue that individuals already hold some legal rights to these forms of evidence.
\end{abstract}

\section{Introduction}
Machine learning systems increasingly make life-changing decisions about individuals. They influence whether laypeople receive loans, whether applicants are invited to job interviews, or even whether students are accused of cheating on an exam. Seeing one’s life placed in the hands of an opaque algorithm can make people feel powerless: What can they do when confronted with an unfavorable decision?

Explainable artificial intelligence has long been motivated, at least in part, by the aim of restoring the agency of individuals who face unfavorable algorithmic decisions \citep{lipton2018mythos,wachter2017counterfactual}. Famously, \emph{algorithmic recourse explanations} have been proposed to recommend actionable steps, e.g., paying off outstanding loans, that enable individuals to reverse an unfavorable decision. Another proposed way to restore individuals’ agency is to provide them with grounds for \emph{contesting} adverse decisions, that is, to provide individuals with information to review the decision-making process and intervene in situations where they feel judged incorrectly or unfairly. For example, individuals may contest a rejected credit application if their rejection was based on living in a neighborhood with high loan default rates; whether the contestation is successful and leads to a reversal of the decision is then subject to a negotiation process.

While the literature on algorithmic recourse has become a cornerstone of XAI research, with both solid conceptual as well as algorithmic foundations \citep{ustun2019actionable,venkatasubramanian2020philosophical,konig2023improvement}, the literature on algorithmic contestability has remained comparatively underexplored. This is striking given contestability arguably has at least as great ethical and legal importance as algorithmic recourse \citep{selbst2018intuitive,lyons2021conceptualising,bayamliouglu2022right}. The reasons for this disparity are undoubtedly multifaceted: contestability is deeply intertwined with legal questions that vary across jurisdictions; technical researchers may feel uncomfortable engaging with legal concepts; and decision makers often have little incentive to provide individuals with evidence that could be used to challenge their decisions. However, we believe there is a more fundamental reason why contestability has been largely neglected as an algorithmic problem: the absence of clear formal definitions and a systematic operationalization of contestability itself. The goal of this paper is to provide precisely these. 

We conceptualize algorithmic contestability as the counterpart to algorithmic recourse, with both approaches ultimately aimed at helping individuals achieve their desired outcomes (Section~\ref{sec:AlgorithmicDecisionMaking}). In the context of recourse, individuals trust the correctness of the model’s decision and use recourse recommendations to obtain the desired outcome. That is, they believe that the decision about the individual (e.g., loan rejection) aligns with the true state (i.e., the loan would not be repaid) and the recourse explanation provides information that enables them to change both the decision and the true state, that is, the loan is approved and ultimately repaid. Algorithmic contestability takes the opposite perspective: the individual maintains that the decision (e.g., loan rejection) is not aligned with the true state (i.e., the loan would have been repaid) and therefore seeks information that allows them to alter the decision to match the true state, so that they receive the loan.

We show that common XAI methods, such as counterfactual explanations \citep{wachter2017counterfactual}, Local Interpretable Model-Agnostic Explanations (LIME) \citep{ribeiro2016should}, and Anchors \citep{ribeiro2018anchors}, are of limited value for algorithmic contestability. While they can reveal conflicts with human intuitions concerning continuity and monotonicity, and thereby point to errors somewhere in the individual's neighborhood, they do not provide sufficient grounds to overturn a decision (\Cref{sec:XAI}). As a response, we move beyond XAI and investigate other types of evidence that can enable contestability from a \emph{normative perspective}, that is, evidence that should lead decision makers to change their decision according to their own ethical norms (\Cref{sec:EpistemicReasons}). The three types of evidence are 1. \emph{Predictive multiplicity}: conflicting predictions for the individual from a set of models with equal empirical and theoretical support; 2. \emph{Incorrect feature values:} wrong information about the individual that if corrected would lead to a different decision; 3. \emph{Neglected overruling evidence:} evidence about other features of the individual that if taken into account would overturn the decision. We show that these forms of evidence can be sufficient to alter algorithmic decisions on normative grounds. We conclude by interpreting the European General Data Protection Regulation (GDPR) in light of our framework and argue that individuals already possess rights to these forms of evidence under the existing legislative landscape \Cref{sec:Legal}.

\section{Related work}
\label{sec:related}

Contestability is a core ethical requirement in algorithmic decision-making and, consequently, a central element in several worldwide legislations \citep{vaccaro2019contestability,maxwell2023meaningful}. For example, it is one of the eight guiding principles of Australia’s AI Ethics Framework \citep{AI_Ethics_Framework_2019}, lies at the “heart of legal rights that afford individuals access to personal data and insight into decision-making processes” \citep{mulligan2019shaping}, and constitutes the “backbone” of the widely discussed Article 22 of the GDPR \citep{bayamliouglu2022right}. The goals and nature of contestability are, however, multifaceted and often debated, which makes both its legal and technical implementation challenging \citep{lyons2021conceptualising}. It is widely agreed that algorithmic contestability remains under-researched, especially from a technical perspective \citep{lyons2021conceptualising,alfrink2023contestable,maxwell2023meaningful,yurrita2025identifying}.

\citet{henin2021framework} provide a norm-based formal framework of contestability that allows decisions to be contested by reference to norms and justifications. Their formal model, \emph{Algocate}, provides an interactive tool for generating evidence in favor of and against a decision. While their idea of building upon norms and justifications is conceptually sound \citep{aler2020contestable}, their technical framework is difficult to implement in practice: it requires a complete logical specification of decision norms as well as an ordering relation over argument strengths. More recently, \citet{moreira2025explainable} attempted to formalize contestability in a comparatively broad sense, encompassing the facilitation of risk analysis, the assurance of accountability, and the enablement of decision reversal. Unlike this work, we focus exclusively on contestability to enable decision reversal and provide a concrete framework together with workable definitions for this specific scenario. Since we narrow our focus to enabling decision reversal through contestability, we abstract away from other aspects in the construction of algorithmic systems that may be contestable \citep{hirsch2017designing,sarra2020put,alfrink2023contestable,yurrita2025identifying}.

While providing individuals with grounds to contest individual decisions is often cited as a central motivation of the field of explainable artificial intelligence (XAI) \citep{ribeiro2016should,lipton2018mythos,adadi2018peeking,molnar2020interpretable}, only very few technical works address contestability in greater detail \citep{mansi2025legally}. One notable exception is the work on counterfactual explanations by \citet{wachter2017counterfactual}. However, while \citet{wachter2017counterfactual} motivate \emph{why} counterfactuals can enable individuals to contest decisions and satisfy the legal requirements established by the GDPR, they do not demonstrate \emph{how} their method can be tailored to this purpose, nor under which conditions contestation based on counterfactuals should be successful. Other explanation methods, such as LIME or Shapley values, are also commonly discussed as potentially fostering contestability \citep{mulligan2019shaping,yurrita2023disentangling}. Many ethical and legal scholars argue that XAI could serve as an “enabler for contestability” \citep{selbst2018intuitive,lyons2021conceptualising,vredenburgh2022right,maxwell2023meaningful,schmude2025two}. Yet, they often remain vague about \emph{how} exactly XAI facilitates such contestation. The most detailed account is offered by \citet{selbst2018intuitive}, who argue that explanations must be paired with human intuitions to support contestability. However, as we will demonstrate, the insights that common XAI methods provide into the (in)correctness of decisions are quite limited, even when individuals actively draw on their own intuitions.

A closely related problem to algorithmic contestability is the problem of algorithmic recourse \citep{ustun2019actionable}. Both algorithmic recourse and contestability aim to reverse decisions; however, in recourse, the individual perceives the decision as correct, whereas in contestability, it is perceived as incorrect \citep{venkatasubramanian2020philosophical}. The recourse literature has evolved significantly over the years \citep{karimi2022survey}, with work investigating the robustness \citep{upadhyay2021towards,konig2025performative}, fairness \citep{von2022fairness}, and privacy risks \citep{pawelczyk2023privacy} of recourse. We hope that our definitions and operationalizations of algorithmic contestability will enable a similar formal treatment of contestability, allowing many ideas and methods to be transferred and ultimately placing algorithmic contestability on equal footing with algorithmic recourse.

\section{From algorithmic decision-making to algorithmic contestability}
\label{sec:AlgorithmicDecisionMaking}
Algorithmic decision-making is characterized by a few core components. Individuals possess \emph{unobserved properties} that decision makers would like to know, but that are difficult or sometimes impossible to measure directly. For example, in lending, credit institutions want to know whether individuals will ultimately repay their loans, and in cheating detection, schools want to know whether a student has cheated. These unobserved properties lie at the heart of decision makers' \emph{core ethical principles}\footnote{To support readers with more technical backgrounds, \Cref{app:philGloss} offers a short glossary explaining the philosophical concepts used throughout the paper.}  that guide their decisions. Ethical does not mean that these principles are justified morally but only that they are maxims the decision maker commits to. For instance, banks aim to grant loans only to individuals who will be able to repay them, and schools aim to fail students who have cheated. Ideally, decision makers would make decisions strictly in accordance with these core ethical principles; we call such decisions \emph{normatively correct}.

\begin{definition}
Let $i$ be an individual with an unobserved binary attribute $y^{(i)} \in \lbrace 0,1\rbrace$. We define the \emph{normatively correct} decision as $d^*(i):=y^{(i)}$.
\end{definition}

However, because these properties are unobserved, decision makers must infer them from \emph{features}, i.e., measured properties that are highly correlated with the unobserved property of interest. Examples include age, salary, or credit utilization in lending, or a student’s submitted answers in cheating detection. These features allow decision makers to predict the unobserved property with some probability, but such predictions are never perfect. For instance, although a young person with a high salary and low credit utilization is very likely to repay her loan, she may still default due to unforeseen circumstances such as job loss or an accident. Decisions based on such predictions therefore require decision makers to accept a certain \emph{inductive risk} \citep{douglas2000inductive}. For example, they may choose to reject loan applicants whose probability of repayment falls below a given threshold. Accepting inductive risk entails accepting the possibility of normatively incorrect decisions: some applicants will be approved despite ultimately defaulting, and some students will be accused of cheating despite having done nothing wrong. Setting the level of inductive risk is, ultimately, an ethical decision that decision makers must make \citep{douglas2009science}. We call decisions that adhere to the chosen inductive risk \emph{epistemically correct}.

\begin{definition}
Let $\tau \in [0,1]$ denote the inductive-risk threshold, and let $p(x^{(i)}) := \mathbb{P}(Y = 1 \mid X = x^{(i)})$ be the probability that individual $i$ possesses the unobserved property $Y=1$ given her features $x^{(i)}$. We define the \emph{epistemically correct} decision given features $X$ as
\begin{equation*}
d(x^{(i)}) := \begin{cases}
1 \quad \text{if } p(x^{(i)}) > \tau, \\
0 \quad \text{else.}
\end{cases}
\end{equation*}
\end{definition}

Decision makers do not have access to the true probabilities of loan default or cheating conditional on feature values. Instead, they must estimate these probabilities from data, typically using machine learning methods. The machine learning model selection process involves a variety of choices—concerning model classes, regularization techniques, hyperparameters, and more (this pipeline is discussed in more detail in \Cref{subsec:multiplicity})—with the goal of training a model on a dataset that achieves high predictive performance. Moreover, decision makers usually do not have access to individuals’ true feature values. In lending, for example, individuals may provide incorrect information, or lenders may store inaccurate data. 

\begin{definition}
Let $\hat{p}$ denote the model resulting from this model selection process, and let $\tilde{x}^{(i)}$ denote the feature values available to the decision maker. We define the \emph{actual decision} as
\begin{equation*}
\hat{d}(\tilde{x}^{(i)}) := \begin{cases}
1 \quad \text{if } \hat{p}(\tilde{x}^{(i)}) > \tau, \\
0 \quad \text{else.}
\end{cases}
\end{equation*}
\end{definition}

Algorithmic \emph{decisions} thus result from two components: decision makers have ethical norms that determine which decisions are (normatively or epistemically) correct according to their core ethical principles and chosen inductive risk, and these ethical norms yield actual decisions only through their combination with epistemic beliefs. In the context of machine learning, these epistemic beliefs depend on measured feature values and on the prediction model induced by a learning algorithm.

\begin{table*}[t]
\centering
\begin{minipage}[c]{0.94\textwidth} 
  \begin{tabularx}{\textwidth}{@{}c@{\hspace{8pt}} p{2.7cm} p{5.1cm} X}
  \toprule
   & \textbf{Level} & \textbf{Meaning} & \textbf{Evidence} \\
  \midrule
  \levelicon{\faBalanceScaleLeft}
  & Normative \newline contestability 
  & The decision $\hat{d}(x^{(i)})$ misjudges the actual label $y^{(i)}$.
  & Overruling evidence (\Cref{subsec:Overruling}) \\
  \addlinespace
  \levelicon{\faSearch}
  & Epistemic \newline contestability 
  & The decision $\hat{d}(x^{(i)})$ misjudges what the features warrant $d(x^{(i)})$.
  & Incorrect feature values (\Cref{subsec:WrongFeatureValues}), \newline Predictive multiplicity (\Cref{subsec:multiplicity})
     \\
  \addlinespace
  \levelicon{\faStreetView}
  & Somewhere \newline contestability 
  & Some nearby $x$ is epistemically contestable.
  & Reason intuitions + Anchors (\Cref{subsec:anchors}), Continuity intuitions + Counterfactuals (\Cref{subsec:counterfactuals})
     \\
  \addlinespace
  \levelicon{\faChartArea}
  & Somewhere \newline inaccuracy 
  & The probability estimate $\hat{p}$ is off somewhere nearby.
  & Monotonicity intuitions + LIME (\Cref{subsec:LIME}) \\
  \bottomrule
  \end{tabularx}
\end{minipage}
\begin{minipage}[c]{0.05\textwidth}
  \centering
  \begin{tikzpicture}[scale=1]
    \draw[<-, thick] (0,0) -- (0,-4.5) 
      node[midway, rotate=90, below] {stronger contesting grounds};
  \end{tikzpicture}
\end{minipage}
\caption{\textbf{Levels of contestability.} The table summarizes our main contributions in this paper: 1) We introduce four levels of algorithmic contestability, namely normative, epistemic, somewhere contestability, and local inaccuracies; 2) We show that XAI explanations, even when combined with human intuitions, only allow one to reach the lowest two levels; 3) We show that other types of evidence, such as predictive multiplicity, support higher levels of contestability. The arrow on the side indicates that establishing higher levels of contestability provides stronger normative grounds for overturning a decision.}
\label{tab:levels}
\end{table*}

That a decision is normatively or epistemically correct in our terminology does not mean that it is uncontroversial, only that it aligns with the decision maker’s ethical norms. One may, for instance, contest a school’s core ethical principle and argue that using generative AI in exams should not be penalized \citep{anders2023using}. Likewise, decisions that are epistemically correct can still be challenged on ethical grounds, for example, by arguing that a threshold-based policy in cheating detection involves an unacceptably high inductive risk of false accusations. In this work, however, we focus not on challenging ethical norms but on challenging the epistemic beliefs underlying algorithmic decisions. Which decisions should be contestable on such grounds?

\begin{definition}
\label{def:contestable}
Assume a decision $\hat{d}(\tilde{x}^{(i)})$ is made about an individual $i$. We call the decision
\begin{itemize}
\item \emph{normatively contestable} if the decision is normatively incorrect, i.e., $\hat{d}(\tilde{x}^{(i)})\neq d^*(i)$, and
\item \emph{epistemically contestable} relative to a feature set $X$ if the decision is epistemically incorrect, i.e., $\hat{d}(\tilde{x}^{(i)})\neq d(x^{(i)})$.
\end{itemize}
\end{definition}

Normative contestability binds decision makers to their own ethical principles. For example, if a student is accused of cheating but did not in fact cheat, she should be able to contest and overturn the decision as a matter of moral principle. Epistemic contestability, by contrast, binds decision makers to the inductive risk they have accepted. For instance, if a decision maker assigns a high default risk to an individual who is in fact low risk, the individual should be able to challenge and reverse the decision on epistemic grounds. 

Anchoring contestability in decision makers’ own principles has a key advantage: most disputes over algorithmic decisions are unlikely to reach the courts and are instead resolved through negotiation between the individual and the decision maker. If decision makers remain committed to their ethical and epistemic principles, then decisions that are normatively or epistemically contestable should, provided individuals can present sufficient supporting evidence, be overturned.

\section{XAI explanations can only establish that errors occur somewhere}
\label{sec:XAI}
Model-agnostic XAI techniques provide insight into a machine learning model by analyzing its sensitivity to changes in the input \citep{scholbeck2019sampling}. Prominent examples include counterfactual explanations, LIME, or Anchors.

Why do scholars regard XAI as a key enabler of algorithmic contestability? This belief is rooted in an idea that has been articulated by \citet{selbst2018intuitive} and is illustrated in \Cref{fig:violations}: suppose an explanation method, such as LIME, indicates that the requested loan amount has a locally positive effect on the model’s prediction, implying that requesting a higher loan increases the likelihood of approval. An individual may find this relationship highly counterintuitive, expecting instead that requesting a larger loan would reduce her chances of obtaining it. In this way, the combination of \emph{human intuitions} and \emph{model explanations} can reveal potential model errors and thereby seem to provide grounds for contestation. In the following, we show that while this line of reasoning is not misguided, the grounds for contestation it yields are generally weak.

But before we do so, we must clarify what we mean by \emph{human intuitions}. Rather than estimating conditional probabilities over unobserved variables, as machine learning models do, humans typically rely on \emph{rules of thumb} when forming beliefs and making decisions \citep{gigerenzer2011heuristic,selbst2018intuitive}. These rules are not arbitrary; they often exhibit high validity, having emerged through repeated experience and pattern recognition \citep{gigerenzer2023intelligence}. Both experts and non-experts draw on a rich repertoire of such intuitive rules, including the following: if a person highly similar to me obtains the loan, then so should I (\emph{continuity rule}); a person with a higher income than me should, all else equal, have a higher probability of obtaining the loan (\emph{monotonicity rule}); and certain properties of mine constitute sufficient or insufficient reasons for granting or denying the loan (\emph{reasons rule}). We call a human intuition \emph{correct} if the rule it expresses applies in the given context.

\begin{figure*}[t]
    \centering
    \includegraphics[width=0.95\linewidth]{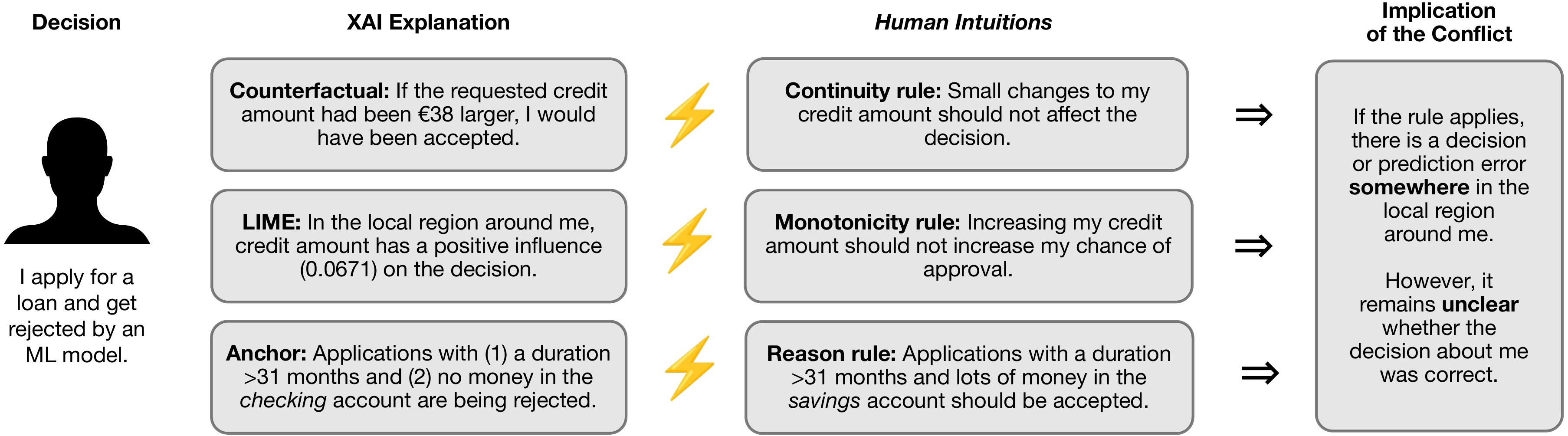}
    \caption{\textbf{XAI explanations may conflict with human intuitions.} For example, (1) a counterfactual explanation may conflict with continuity rules, (2) a LIME explanation may conflict with monotonicity rules, and (3) anchors may contradict reason rules. Even under the strong assumption that these intuitions are correct, i.e. the rules apply, such conflicts imply an error only somewhere in the neighborhood of the explained observation and thus insufficient grounds to overturn the decision at hand. Details on the examples are reported in Appendix \ref{app:details-examples}.}
    \label{fig:violations}
\end{figure*}

Suppose that an XAI explanation conflicts with human intuitions, as in the example above. To better understand what can be inferred from such a conflict, we introduce two notions of error. \emph{Somewhere contestability} refers to the existence of epistemically incorrect decisions within a neighborhood of the individual, whereas \emph{somewhere inaccuracy} refers to the existence of incorrect model predictions in such a neighborhood. More formally:
\begin{definition}
\label{def:somewhere}
Let $i$ be an individual and $\mathcal{B}_{\epsilon}(\tilde{x}^{(i)}):=\lbrace z\in \mathcal{X}\mid\|z-\tilde{x}^{(i)}\|<\epsilon\rbrace$ be an $\epsilon$-ball around its feature values for a given $\epsilon>0$. We call a decision \emph{somewhere contestable} if there exists a $x\in \mathcal{B}_{\epsilon}(\tilde{x}^{(i)})$ such that $\hat{d}(x) \neq d(x)$. We call a machine learning model $\hat{p}$ \emph{somewhere inaccurate} if there exists an $x\in \mathcal{B}_{\epsilon}(\tilde{x}^{(i)})$ such that $\hat{p}(x) \neq p(x)$.
\end{definition}

Unlike epistemic and normative contestability, somewhere contestability and somewhere inaccuracies are insufficient to overturn a decision. The presence of an epistemically incorrect prediction somewhere in the neighborhood of an individual remains fully compatible with the decision for that individual being correct. Logically, this yields a hierarchy: epistemic contestability is stronger than somewhere contestability, which in turn is stronger than somewhere inaccuracy.\footnote{We provide a short proof for this claim in the Appendix \ref{theo:Epistemic-weakly}.} \Cref{tab:levels} provides an overview of all the different levels of contestability. Although errors somewhere in the neighborhood are a necessary condition for epistemically incorrect decisions and may thus serve as a starting point for algorithmic contestation, they provide only a very weak signal: most predictive models exhibit errors somewhere, leaving it largely indeterminate whether the specific decision at hand ought to be overturned.

In the following, we show that a conflict between human intuitions and XAI explanations—even in the best-case scenario where those intuitions are correct—supports at most an inference to an error somewhere. When the intuitions are themselves incorrect, which happens regularly \citep{tversky1974judgment}, XAI explanations do not support any inference about the correctness of the decision of interest.

\subsection{Detecting continuity conflicts with counterfactual explanations}
\label{subsec:counterfactuals}
Continuity rules take the form: \emph{If two individuals have highly similar feature values, they should be judged similarly.} Such rules are widely adopted, including in the context of fairness \citep{kusner2017counterfactual}, as humans generally expect decisions to be made consistently \citep{kolodner2014case,selbst2018intuitive}.
\begin{definition}[Continuity rule]
    If person $i$ and $j$ are very similar in their measured features $\tilde{x}^{(i)}$, they should receive the same decision, i.e. $d(\tilde{x}^{(i)})=d(\tilde{x}^{(j)})$.
\end{definition} 
However, discontinuities can be justified. Consider two highly similar loan applicants, where one is 17 and the other is 18 years old. Granting the loan to the latter but not the former is justified, as they differ in legal capability, which is relevant in lending contexts.

To check whether algorithmic decision-making conflicts with continuity rules, individuals need access to the feature values of similar alternative individuals who receive a different decision. We believe that counterfactual explanations are the best available approach to generate and select such alternative feature values.\footnote{Closest real instances receiving a different classification would also be a reasonable choice, but raise data privacy concerns \citep{pawelczyk2023privacy}.} A counterfactual identifies feature values that are maximally similar to those of the individual but would have resulted in a different decision. 

\begin{definition}[Counterfactual explanation] Let $x\in\mathcal{X}$, then a counterfactual $x_c$ is an alternative point in $\mathcal{X}$ that lies (maximally)\footnote{While the original formulation of counterfactuals by \citet{wachter2017counterfactual} requires the counterfactual to be the closest input across the decision boundary, most methods generate counterfactuals heuristically, e.g. using evolutionary algorithms \citep{dandl2020multi}, and therefore produce close rather than closest counterfactuals. For purposes of contestability, maximal proximity is less essential than it is for recourse \citep{venkatasubramanian2020philosophical,freiesleben2022intriguing}.} close to $x$ according to the distance function $\|\cdot\|$ but receives a different decision, that is, $\hat{d}(x)=1$ and $\hat{d}(x_c)=0$ or vice versa. 
\end{definition}

The following theorem (proof in \Cref{app:proofs}) shows that counterfactuals can provide sufficient information for somewhere contestability, however, as the Counterexample~\ref{ce:counterfactuals} shows, they do not render \emph{the decision in question} epistemically contestable.

\begin{restatable}{theorem}{retheocount}
    \label{theo:Count}
    If the continuity rule applies to the individual and her counterfactual, and their distance is smaller than $\epsilon$ according to $\|\cdot\|$, the decision is somewhere contestable.
\end{restatable}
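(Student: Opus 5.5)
The plan is to argue directly from the definitions. Let $x := \tilde{x}^{(i)}$ be the individual's recorded features and $x_c$ her counterfactual. By the definition of a counterfactual explanation, the actual decisions at these two points differ: $\hat{d}(x) \neq \hat{d}(x_c)$. By hypothesis the continuity rule applies to the pair, so the epistemically correct decisions coincide: $d(x) = d(x_c)$. By hypothesis again, $\|x_c - x\| < \epsilon$, so $x_c \in \mathcal{B}_{\epsilon}(\tilde{x}^{(i)})$. Trivially, $x$ itself also lies in this ball.

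The key step is a two-case pigeonhole argument. Since all decisions are binary, the common value $d(x) = d(x_c)$ can agree with at most one of the two distinct values $\hat{d}(x)$ and $\hat{d}(x_c)$.

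\emph{Case 1: $\hat{d}(x) \neq d(x)$.} Then the point $x$ itself witnesses the existential in Definition~\ref{def:somewhere}. In fact the decision is then even epistemically contestable, reading $d(x^{(i)})$ at the recorded features.

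\emph{Case 2: $\hat{d}(x) = d(x)$.} Then
\[
\hat{d}(x_c) \neq \hat{d}(x) = d(x) = d(x_c),
\]
so $x_c$ is the required witness in the ball.

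In either case some $z \in \mathcal{B}_{\epsilon}(\tilde{x}^{(i)})$ satisfies $\hat{d}(z) \neq d(z)$. This is exactly somewhere contestability.

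I expect the main obstacle to be interpretive rather than technical. The continuity rule is phrased for persons $i, j$ and their measured features $\tilde{x}$, whereas $d$ and Definition~\ref{def:somewhere} are stated on points of $\mathcal{X}$. I will read ``the rule applies to the individual and her counterfactual'' as the pointwise statement $d(\tilde{x}^{(i)}) = d(x_c)$, where $x_c$ is treated as the feature vector of a hypothetical similar individual. With that reading fixed, the rest is the two-line case split above. I would also remark that the argument does not determine which of the two points is the erroneous one, which is precisely why the conclusion is only somewhere contestability.
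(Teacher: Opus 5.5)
Your proof is correct and follows the paper's argument essentially verbatim. Both use the same case split on whether $\hat{d}(\tilde{x}^{(i)}) = d(\tilde{x}^{(i)})$, and in the second case both take the counterfactual as the witness in the $\epsilon$-ball. The one difference is in the first case: you take $\tilde{x}^{(i)}$ itself as the witness, whereas the paper routes through Theorem~\ref{theo:Epistemic-weakly}, which formally assumes $\tilde{x}^{(i)}=x^{(i)}$, so your version is slightly cleaner.
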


\subsection{Detecting monotonicity conflicts with LIME}
\label{subsec:LIME}
Monotonicity rules take the form: \emph{If an individual differs from another individual only on a feature that has a positive (or negative) relationship with the unobserved target, the former should not be judged worse or respectively better than the latter.} Such rules are widely adopted, as humans expect decisions to be affected by feature changes in a consistent direction \citep{birnbaum2019violations,selbst2018intuitive}.

\begin{definition}[Monotonicity rule]
    If an individual $i$ differs from an individual $j$ only in feature $k$ with $\tilde{x}^{i}_k\geq \tilde{x}^{j}_k$, then person $i$ should receive at least as favorable a probability as $j$, i.e. $p(\tilde{x}^{(i)}) \geq p(\tilde{x}^{(j)})$ (respectively unfavorable $p(\tilde{x}^{(i)}) \leq p(\tilde{x}^{(j)})$ for negative monotonicity in $k$).
\end{definition}

Non-monotonic decision-making can be justified, as many relationships between features and unobserved variables are themselves non-monotonic. For example, increasing income will usually improve creditworthiness, but exceptionally high incomes may correlate with volatile self-employment or irregular cash flow, which can legitimately lower repayment probability.

To check whether algorithmic decision-making conflicts with monotonicity rules, individuals need access to local feature-attribution information that identifies which features the model used and in which direction they influenced the prediction locally. One possible type of explanation to approximate such local influence directions is LIME. A LIME explanation estimates the linear contribution of each feature to the model’s decision in the neighborhood of the individual. 

\begin{definition}[LIME Explanation]
    Let $x\in\mathcal{X}$, then LIME collects the function values of the model $\hat{p}$ on points sampled from a local distribution $\pi_x$ around $x$. A linear regression is then performed on the collected samples and function values, resulting in a linear model $g(x) =\sum\limits_{i=1}^p w_i \cdot x_i$ approximating $\hat{p}$ in the local environment $\pi_x$. The local faithfulness of $g$ to $\hat{p}$ on $\pi_x$ is given by
    \begin{equation*}
        \text{LF}(g,\hat{p}):=\mathbb{E}_{\pi_x}[(g(\pi_x)-\hat{p}(\pi_x))^2].
    \end{equation*}
\end{definition}

The following theorem (proof in \Cref{app:proofs}) shows that LIME explanations, given LIME coefficients $w_k$ point in the opposite direction of monotonicity rules, only provide sufficient information to establish somewhere inaccuracy. However, as the counterexample~\ref{ce:LIME} shows, LIME explanations are insufficient to render decisions (somewhere) epistemically contestable.

\begin{restatable}{theorem}{retheolime}
\label{theo:LIME}
Let $\pi$ be the local environment\footnote{
For notational simplicity, we abbreviate the local environment here as $\pi$ rather than $\pi_{{\tilde{x}^{(i)}}}$.
} around $\tilde{x}^{(i)}$, with $\pi=(\pi_1,\dots,\pi_r)$ jointly independent random variables, and let $\pi$ be constrained by $\mathbb{P}_{\pi}(|\pi-\tilde{x}^{(i)}|>\epsilon)=0$. Assume that the monotonicity intuition applies positively (respectively negatively) to feature $k$ and that the LIME explainer yields $w_k<0$ (respectively $w_k>0$). If the local faithfulness of $g$ satisfies $\text{LF}(g,\hat{p})< \mathbb{V}_{\pi_k}[w_k \pi_k]$, then the model is somewhere inaccurate.
\end{restatable}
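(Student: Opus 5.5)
We argue by contradiction. Suppose the model is accurate everywhere in the neighborhood, i.e.\ $\hat{p}(x)=p(x)$ for all $x$ with $\|x-\tilde{x}^{(i)}\|<\epsilon$. Because $\pi$ is supported in this neighborhood, $\hat{p}(\pi)=p(\pi)$ almost surely. We then show that the faithfulness condition and the sign of $w_k$ cannot both hold. Take the positive case, $w_k<0$ with $p$ nondecreasing in feature $k$; the negative case is symmetric.

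The first step turns the monotonicity rule into a statement about covariance. Since the components of $\pi$ are jointly independent, we can condition on $\pi_{-k}$ and view $p(\pi)$ as a function $h(\pi_k):=p(\pi_k,\pi_{-k})$ of $\pi_k$ alone. The monotonicity rule says $h$ is nondecreasing; we assume it holds on the whole neighborhood and not only at the individual. For any nondecreasing $h$ and any random variable $Z$, the Chebyshev (Harris) covariance inequality gives $\mathrm{Cov}(Z,h(Z))\ge 0$. Averaging over $\pi_{-k}$, and using independence so that the conditional law of $\pi_k$ equals its marginal law, yields
\[
\mathrm{Cov}\bigl(\pi_k,\hat{p}(\pi)\bigr)=\mathrm{Cov}\bigl(\pi_k,p(\pi)\bigr)\ge 0.
\]
Independence is used a second time here: the terms $w_j\pi_j$ with $j\neq k$ are uncorrelated with $\pi_k$, so $\mathrm{Cov}(\pi_k,g(\pi))=w_k\mathbb{V}[\pi_k]$.

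The second step, which is the main obstacle, derives a lower bound on $\mathrm{LF}$. Write the residual as $r:=\hat{p}(\pi)-g(\pi)$. By Cauchy--Schwarz,
\[
|\mathrm{Cov}(\pi_k,r)|\le \sqrt{\mathbb{V}[\pi_k]}\sqrt{\mathbb{V}[r]}\le \sqrt{\mathbb{V}[\pi_k]}\sqrt{\mathrm{LF}(g,\hat{p})}.
\]
The left side equals $\mathrm{Cov}(\pi_k,\hat{p}(\pi))-w_k\mathbb{V}[\pi_k]$. The first term is $\ge 0$ and the second is $-w_k\mathbb{V}[\pi_k]>0$ because $w_k<0$, so
\[
\mathrm{Cov}(\pi_k,r)\ge |w_k|\mathbb{V}[\pi_k].
\]
Squaring and dividing by $\mathbb{V}[\pi_k]$ gives
\[
\mathrm{LF}(g,\hat{p})\ge w_k^2\mathbb{V}[\pi_k]=\mathbb{V}_{\pi_k}[w_k\pi_k],
\]
which contradicts the hypothesis $\mathrm{LF}(g,\hat{p})<\mathbb{V}_{\pi_k}[w_k\pi_k]$.

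This argument does not need $g$ to be the exact least-squares fit, because Cauchy--Schwarz applies to any linear $g$. If $\mathbb{V}[\pi_k]=0$, the hypothesis would require $\mathrm{LF}<0$, which is impossible, so that case is vacuous. The delicate points to check are that the monotonicity rule is read as holding for all pairs in the $\epsilon$-ball, and that the support condition $\mathbb{P}(|\pi-\tilde{x}^{(i)}|>\epsilon)=0$ still permits points at distance exactly $\epsilon$, whereas $\mathcal{B}_\epsilon$ is open. I would handle the latter by assuming the support lies in $\mathcal{B}_\epsilon$, or by noting that the boundary has measure zero. With these points settled, the contradiction shows that $\hat{p}(x)\neq p(x)$ for some $x\in\mathcal{B}_\epsilon(\tilde{x}^{(i)})$, i.e.\ the model is somewhere inaccurate.
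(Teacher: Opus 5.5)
Your proof is correct, and it takes a different route from the paper's. Both start from the same contrapositive ($\hat{p}=p$ almost surely under $\pi$). The paper then applies Jensen's inequality conditionally on $\pi_k$ to get $\mathrm{LF}(g,\hat{p})\ge\mathbb{E}_{\pi_k}[(w_k\pi_k-q(\pi_k))^2]$, where $q(\pi_k):=\mathbb{E}_{\pi_{-k}}[p(\pi_{-k},\pi_k)-\sum_{i\neq k}w_i\pi_i]$. It uses independence to show $q$ is nondecreasing. It finishes with the isotonic-regression fact that the best $L^2$ approximation of the decreasing function $w_k\pi_k$ by a nondecreasing function of $\pi_k$ is the constant $\mathbb{E}[w_k\pi_k]$.

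You instead project the residual linearly onto $\pi_k$. Chebyshev's covariance inequality, applied conditionally on $\pi_{-k}$, gives $\mathrm{Cov}(\pi_k,p(\pi))\ge 0$. Independence gives $\mathrm{Cov}(\pi_k,g(\pi))=w_k\mathbb{V}[\pi_k]$. Cauchy--Schwarz then turns $\mathrm{Cov}(\pi_k,r)\ge|w_k|\mathbb{V}[\pi_k]$ into $\mathrm{LF}\ge w_k^2\mathbb{V}[\pi_k]$.

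The paper's reduction to a one-dimensional monotone-fit problem is conceptually transparent. Its intermediate bound $\mathbb{E}[(\mathbb{E}[r\mid\pi_k])^2]$ is also tighter than your $\mathrm{Cov}(\pi_k,r)^2/\mathbb{V}[\pi_k]$, although both end at the same $\mathbb{V}_{\pi_k}[w_k\pi_k]$.

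Your version is more elementary and fully self-contained. The paper asserts its ``optimal monotone approximant is a constant'' step without proof. The natural proof of that step is exactly your inequality: for nondecreasing $q'$, $\mathbb{E}[(w_k\pi_k-q'(\pi_k))^2]\ge\mathbb{V}[w_k\pi_k]+\mathbb{V}[q'(\pi_k)]-2\mathrm{Cov}(w_k\pi_k,q'(\pi_k))\ge\mathbb{V}[w_k\pi_k]$.

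You also handle two points the paper passes over. One is the degenerate case $\mathbb{V}[\pi_k]=0$. The other is the mismatch between the condition $\mathbb{P}_{\pi}(|\pi-\tilde{x}^{(i)}|>\epsilon)=0$ and the open ball $\mathcal{B}_\epsilon(\tilde{x}^{(i)})$, which the paper's first step uses without comment. On this point, ``the boundary has measure zero'' is an extra assumption, and it fails for discrete $\pi$. The cleaner fix is to require $\pi$ to be supported in $\mathcal{B}_\epsilon(\tilde{x}^{(i)})$.
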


Other, more direct, explanation methods for detecting monotonicity conflicts include individual conditional expectation curves \citep{goldstein2015peeking} or prediction gradients \citep{wang2024gradient}, both of which also allow for the detection of somewhere inaccuracies. We chose LIME here to illustrate monotonicity conflicts because it is most commonly used in contestability-relevant domains such as lending or hiring.

\subsection{Detecting reason conflicts with Anchors}
\label{subsec:anchors}
Reason\footnote{This reflects a specific conception of reasons that ties them to sufficiency for particular decisions. An alternative view treats reasons as factors that push a decision in a given direction \citep{snedegar2018reasons}. On this account, good reasons are factors that, when present, increase the probability of the unobserved variable of interest toward the realized decision. However, this view faces a key difficulty: what counts as a good reason is often highly context-dependent and sensitive to specific counterfactual feature values, making it hard to formulate as a general rule. Moreover, even if suitable local rules could be specified, the most promising auditing method, counterfactual Shapley values \citep{albini2022counterfactual}, would still be inadequate, as it relies heavily on model predictions for data points far from the instance being explained and thus cannot even reliably detect somewhere inaccuracies.}
rules take the form: \emph{If a person has certain feature values on a subset of relevant features, this provides sufficient grounds for making the decision.} Humans often justify their decisions by providing \emph{reasons}, aspects of the specific case that motivated the decision \citep{raz1975reasons}. These reasons can then be challenged by questioning their validity or by highlighting other reasons that override them. 

\begin{definition}[Reason rule]
If an individual $i$ has feature values satisfying conditions $A$, then this provides sufficient grounds to make the decision $d(\tilde{x}^{(i)})=1$ rather than $d(\tilde{x}^{(i)})=0$, or vice versa. 
\end{definition}

Some reason rules adopted by humans are in fact incorrect. For example, while many people assume that a very high salary suffices to justify a positive credit decision, substantial debt or an unstable job history may outweigh a high salary, thereby justifying a negative decision.

To assess whether algorithmic decision-making conflicts with reason rules, individuals need access to local decision rules that identify which minimal sets of feature values the model treats as sufficient grounds for its prediction. We believe that Anchors are the best available approach to approximate such locally sufficient conditions. An anchor identifies a set of feature constraints that, when satisfied, render the model’s prediction locally invariant with high probability in the surrounding neighborhood.

\begin{definition}[Anchor Explanation]
Let $x\in\mathcal{X}$, an \emph{anchor} is a rule $B$ (a set of feature conditions) such that, for points sampled from a local distribution $\pi_x$ that satisfy $B$, the decision remains stable with high \emph{precision}. The precision of an anchor $B$ is defined as $\text{prec}(B):=\mathbb{E}_{\pi_x\mid B}[\mathbf{1}_{\hat{d}(x)=\hat{d}(\pi_x)}]=1-\delta$ for some $\delta\in[0,1]$.
\end{definition}

The following theorem (proof in \Cref{app:proofs}) shows that Anchor explanations, given the Anchor indicates the opposite decision as the reason rule and both have sufficient overlap, can provide sufficient information to establish somewhere contestability. However, as the counterexample~\ref{ce:anchors} shows, they do not render decisions epistemically contestable.

\begin{restatable}{theorem}{retheoanchor}
\label{theo:anchor}
Let $i$ be an individual to whom the reason rule $A$ applies implying a positive (respectively negative) decision and the anchor $B$ applies, leading to a negative (respectively positive) decision with precision $\text{prec}(B)=1-\delta$ and $\mathbb{P}_{\pi_x}(B)>0$, and let $\pi$ be constrained by $\mathbb{P}_{\pi}(|\pi-\tilde{x}^{(i)}|>\epsilon)=0$. If $\mathbb{P}_{\pi_x}(A\cap B)>\delta$, then the decision is somewhere contestable.
\end{restatable}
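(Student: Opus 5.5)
The plan is a measure-counting argument that locates a single point in the neighborhood where the reason rule and the model's decision disagree. Take the case where the reason rule $A$ implies a positive decision and the anchor $B$ yields $\hat{d}(\tilde{x}^{(i)})=0$; the other case is symmetric.

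First I read the reason rule as in its definition: for every feature vector $z$ satisfying the conditions $A$, the epistemically correct decision is $d(z)=1$. This is the \enquote{correct intuition} assumption, used in the same way as the continuity rule in Theorem~\ref{theo:Count}.

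Next I use the anchor's precision to bound how much of the $A\cap B$ region can have $\hat{d}=0$. By definition,
\begin{equation*}
\mathbb{P}_{\pi_x}\bigl(\hat{d}(\pi_x)\neq 0 \mid B\bigr)=\delta,
\end{equation*}
so multiplying through by $\mathbb{P}_{\pi_x}(B)>0$ gives
\begin{equation*}
\mathbb{P}_{\pi_x}\bigl(B\cap\{\hat{d}(\pi_x)=1\}\bigr)=\delta\,\mathbb{P}_{\pi_x}(B)\le\delta .
\end{equation*}
The set $A\cap B$ splits into the part where $\hat{d}=0$ and the part where $\hat{d}=1$, and the second part has probability at most $\delta$. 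Hence
\begin{equation*}
\mathbb{P}_{\pi_x}\bigl(A\cap B\cap\{\hat{d}(\pi_x)=0\}\bigr)\ \ge\ \mathbb{P}_{\pi_x}(A\cap B)-\delta\ >\ 0 .
\end{equation*}
This event has positive probability, so it is nonempty, and we can pick a point $z$ in it.

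The final step checks that $z$ is a witness for Definition~\ref{def:somewhere}. Since $z\in A$, the reason rule gives $d(z)=1$, while $\hat{d}(z)=0$, so $d(z)\neq\hat{d}(z)$. Since $\mathbb{P}_{\pi}(|\pi-\tilde{x}^{(i)}|>\epsilon)=0$, almost every sample lies within distance $\epsilon$. A positive-probability event minus a null event is still nonempty, so $z$ can be chosen with $|z-\tilde{x}^{(i)}|\le\epsilon$.

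I expect this boundary issue to be the main obstacle. The constraint only gives $\le\epsilon$, whereas the ball $\mathcal{B}_\epsilon$ in Definition~\ref{def:somewhere} is open. I would either treat the ball as closed, or enlarge $\epsilon$ slightly, noting that the definition allows any $\epsilon>0$. The other delicate point is the measure-theoretic reading of the precision, since conditioning on $B$ needs $\mathbb{P}(B)>0$, which the statement assumes. With these resolved, the decision is somewhere contestable.
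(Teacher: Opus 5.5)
Your proof is correct and runs on the same inequality as the paper's proof. The paper argues by contraposition: assuming $\hat{d}=d$ almost surely on the neighborhood, it forces $A\cap B\subseteq\{\hat{d}=1\}$ up to a null set and derives $\delta\geq\mathbb{P}_{\pi_x}(A\cap B)/\mathbb{P}_{\pi_x}(B)\geq\mathbb{P}_{\pi_x}(A\cap B)$, whereas you bound $\mathbb{P}_{\pi_x}(A\cap B\cap\{\hat{d}=0\})$ from below directly and exhibit an explicit witness. The open-versus-closed ball issue you flag is genuine, and the paper passes over it silently when it concludes $\mathbb{P}_{\pi}(d(\pi)=\hat{d}(\pi))=1$, so your handling is if anything more careful than the original.
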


\section{Evidence that allows for normative and strong epistemic contestability}
\label{sec:EpistemicReasons}

We have shown that conflicts between human intuitions and standard XAI explanations reveal errors somewhere; however, as argued, such errors are insufficient to justify overturning a decision. What kind of evidence, then, would enable individuals to assess and demonstrate that a decision is normatively or epistemically contestable? In the following, we examine three types of evidence that can be sufficient to warrant reversal: predictive multiplicity, incorrect feature values, and ignored overruling evidence.

\subsection{Predictive multiplicity reflects the chances for epistemic contestability}
\label{subsec:multiplicity}
Machine learning is, at its core, a tool of inductive inference. Given a dataset, e.g., containing creditworthy and non-creditworthy applicants, the goal is to learn a model that is able to distinguish loan applicants who will repay their loan from those who will not. Like any other tool of inductive inference, machine learning requires inductive assumptions to generalize to novel instances.

Modelers incorporate these inductive assumptions into the modeling pipeline through design choices that reflect their domain knowledge, always with the goal of best estimating the true underlying relationship $p$. Model classes, regularization techniques, and hyperparameters are selected based on this knowledge. In addition to such \emph{intentional choices}, model selection is also influenced by \emph{conventional choices}, such as the selection of a learning rate, or \emph{arbitrary choices}, such as the selection of random seeds \citep{ganesh2025systemizing,cavus2025role}. Even though inductive assumptions play a central role in guiding the model selection process, the iron rule in machine learning is that only models with strong performance on a holdout set are ultimately selected \citep{hardt2025emerging}.

As with all tools of inductive inference, this gives rise to a problem of \emph{contrastive underdetermination} \citep{laudan1990demystifying,sep-scientific-underdetermination}: multiple jointly incompatible models align equally well with both the observed data and the relevant domain knowledge, making it difficult to choose among them. In machine learning, this phenomenon has been called \emph{model multiplicity} or the \emph{Rashomon effect} \citep{breiman2001statistical,black2022model}. There always exists a multitude of prediction models that cannot be distinguished from an epistemic perspective: they result from the same model selection process and exhibit (almost) identical predictive performance.\footnote{Indeed, non-epistemic values \citep{sep-scientific-knowledge-social}, such as fairness or interpretability concerns, can guide model selection in such situations, an idea now widely discussed in the model multiplicity literature \citep{black2022model,xin2022exploring,rudin2024position,meyer2025perceptions,holtgen2025reconsidering}.} The set of such models has been called the \emph{Rashomon set}.

\begin{definition}[Rashomon model and set]
Let $S$ denote the model selection process, and let $H_{D,\delta}$ denote a random variable describing the intentional, conventional, and arbitrary inductive choices, which are informed by the dataset $D$ and the performance satisfaction level $\delta$. For a given model $\hat{p}:=S(H_{D,\delta},D)$, we call
\begin{itemize}
\item $\hat{q}$ a \emph{Rashomon model} relative to $\hat{p}$ if both result from the same model selection process applied to the same dataset and performance satisfaction level but differ in their inductive choices $H_{D,\delta}$; formally, $\hat{q}=S(H'_{D,\delta},D)$ for some $H'_{D,\delta}\neq H_{D,\delta}$ with positive density; and
\item $\text{RS}_{\hat{p}}:=\lbrace \hat{q}_1,\dots,\hat{q}_n\rbrace$ a \emph{Rashomon set} if it consists of $n$ randomly drawn Rashomon models, formally induced by i.i.d. sampled inductive choices $H_{D,\delta}^{(i)}\sim H_{D,\delta}$ with $\hat{q}_i=S(H_{D,\delta}^{(i)},D)$.
\end{itemize}
\end{definition}

Since all Rashomon models surpass the performance satisfaction level, they exhibit similar performance. Indeed, similar performance is often used as a defining criterion for Rashomon models \citep{fisher2019all,watson2023predictive,ganesh2025systemizing}. We argue, however, that arising from the same model-selection process provides a more appropriate definition, as it ensures not only comparable compatibility with the data across Rashomon models, but also consistency with researchers’ background knowledge as encoded in the inductive choices.

From a purely epistemic perspective, all Rashomon models must be regarded as on a par: they have equal claim to representing the ground truth dependence $p$. Despite this, they can differ substantially in their predictions for individual instances, a phenomenon known as \emph{predictive multiplicity} \citep{marx2020predictive}.\footnote{A variety of global measures for predictive multiplicity have been proposed \citep{watson2023predictive}.} Since these models have equal epistemic justification, so do their predictions. If some predictions are positive while others are negative, this provides a meaningful basis for contesting the decision, as argued by \citet{black2022model} and formalized in the following theorem. 

The theorem shows that, unless the decision maker has good reason to regard the selected model as superior, predictive multiplicity yields a lower bound on the probability that a decision is epistemically contestable. In this way, predictive multiplicity offers a principled route to establishing epistemic contestability.

\begin{restatable}{theorem}{retheorashomon}
Let $\hat{p}:=S(H_{D,\delta},D)$ denote the prediction model used in decision $\hat{d}$, obtained from a random draw from $H_{D,\delta}$, and let $\text{RS}_{\hat{p}}$ denote a Rashomon set. Assume there is no measurement error in the feature values, i.e. $\tilde{x}^{(i)}=x^{(i)}$. Then the probability that $\hat{p}$ is epistemically incorrect for an individual $i$ can be lower bounded by
\begin{equation*}
\scriptstyle{\mathbb{P}_{H_{D,\delta}}(d(x^{(i)})\neq\hat{d}(x^{(i)}))\geq \text{min}\lbrace \mathbb{P}_{H_{D,\delta}}(\hat{d}(x^{(i)})=1), \mathbb{P}_{H_{D,\delta}}(\hat{d}(x^{(i)})=0)\rbrace.}
\end{equation*}
We denote this lower bound by $\theta$, which can be estimated via the minimum of the proportions of positive and negative decisions in the Rashomon set, i.e.
\begin{equation*}
\scriptstyle{\hat{\theta}_{RS_{\hat{p}}}:=\text{min}\Big\lbrace\frac{1}{|RS_{\hat{p}}|}\sum\limits_{m\in RS_{\hat{p}}}\mathbf{1}_{\hat{d}_m(x^{(i)})=1},\frac{1}{|RS_{\hat{p}}|}\sum\limits_{m\in RS_{\hat{p}}}\mathbf{1}_{\hat{d}_m(x^{(i)})=0}\Big\rbrace.}
\end{equation*}
The estimator is asymptotically consistent, although it systematically underestimates $\theta$ in finite samples.\footnote{The finite-sample downward bias can be bounded; see \citet{clark1961greatest}.}
\end{restatable}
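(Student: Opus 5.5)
The key observation is that, once $\tilde{x}^{(i)}=x^{(i)}$, the epistemically correct decision $d(x^{(i)})$ does not depend on the random inductive choices $H_{D,\delta}$. It is fixed by the true conditional probability $p$ and the threshold $\tau$, so it is a deterministic value in $\{0,1\}$. The only randomness is in $\hat{p}=S(H_{D,\delta},D)$, hence in $\hat{d}(x^{(i)})$. The plan is therefore a case split on the unknown value of $d(x^{(i)})$.

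\textbf{Step 1: the lower bound.} First suppose $d(x^{(i)})=1$. Then the event $\{d(x^{(i)})\neq\hat{d}(x^{(i)})\}$ coincides with $\{\hat{d}(x^{(i)})=0\}$, so
\[
\mathbb{P}_{H_{D,\delta}}\bigl(d(x^{(i)})\neq\hat{d}(x^{(i)})\bigr)=\mathbb{P}_{H_{D,\delta}}\bigl(\hat{d}(x^{(i)})=0\bigr).
\]
Symmetrically, if $d(x^{(i)})=0$ the error probability equals $\mathbb{P}_{H_{D,\delta}}(\hat{d}(x^{(i)})=1)$. In either case it is one of the two quantities inside the minimum, so it is at least $\theta$. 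The bound holds without knowing which case obtains, and this is exactly why a minimum appears. The assumption of no measurement error is used only so that $d$ is evaluated at the same point as $\hat{d}$.

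\textbf{Step 2: consistency of the estimator.} The Rashomon models $\hat{q}_j=S(H^{(j)}_{D,\delta},D)$ are built from i.i.d. draws with the same law as the $H_{D,\delta}$ that generated $\hat{p}$. Therefore the indicators $\mathbf{1}_{\hat{d}_{\hat{q}_j}(x^{(i)})=1}$ are i.i.d. Bernoulli with parameter $q:=\mathbb{P}_{H_{D,\delta}}(\hat{d}(x^{(i)})=1)$. Let $\hat{q}_n$ denote the proportion of positive decisions in $\text{RS}_{\hat{p}}$. By the strong law of large numbers, $\hat{q}_n\to q$ almost surely. Since $\hat{\theta}=\min\{\hat{q}_n,1-\hat{q}_n\}$ and $t\mapsto\min\{t,1-t\}$ is continuous, the continuous mapping theorem gives $\hat{\theta}\to\min\{q,1-q\}=\theta$.

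\textbf{Step 3: downward bias.} The map $f(t)=\min\{t,1-t\}$ is concave, being a minimum of affine functions. Jensen's inequality then gives
\[
\mathbb{E}[\hat{\theta}]=\mathbb{E}[f(\hat{q}_n)]\le f(\mathbb{E}[\hat{q}_n])=f(q)=\theta.
\]
The inequality is strict whenever $\hat{q}_n$ puts positive mass on both sides of $1/2$. This is the "systematically underestimates" claim. For a quantitative bound one can refer to Clark's result on the expected maximum or minimum, as in the footnote.

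\textbf{Main obstacle.} The mathematics is routine. The delicate part is being precise about which probability space $\mathbb{P}_{H_{D,\delta}}$ refers to. One must argue that the deployed model $\hat{p}$ is itself a draw from the same distribution as the Rashomon models, which the statement assumes. One must also make clear that "epistemically incorrect with probability $\ge\theta$" is a statement over the randomness of model selection, not over the individual.
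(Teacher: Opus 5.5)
Your proposal is correct and follows the paper's proof essentially step for step: the bound comes from a case split on the fixed value $d(x^{(i)})\in\{0,1\}$, consistency from the law of large numbers plus the continuous mapping theorem applied to the minimum, and the downward bias from Jensen's inequality for the concave minimum. The small differences are valid refinements rather than a different route: you use the strong rather than the weak law, you rewrite $\hat{\theta}$ as $\min\{t,1-t\}$ of a single proportion, and you add a condition under which the bias is strict.
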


Predictive multiplicity, and thus the prospects for successful epistemic contestability, is high in two scenarios: (1) when all predictions are close to $\tau$ but some lie above and others below, corresponding to high aleatoric uncertainty; and (2) when predictions are widely spread, with some far above and others far below $\tau$, corresponding to high epistemic uncertainty \citep{hullermeier2021aleatoric}. 

\begin{policybox}
\textbf{Policy implication.} Individuals should be informed of the predictive multiplicity underlying their decision by reporting predictive multiplicity in their case. The respective Rashomon set should be randomly sampled from the space of plausible, high-performing models and constructed independently of the individual’s feature values.
\end{policybox}

\subsection{Wrong feature values can enable epistemic contestability}
\label{subsec:WrongFeatureValues}
Algorithmic decision-making crucially relies on the measured feature values of individuals $\tilde{x}^{(i)}$. In the context of lending, some of these features are provided to the credit institution by the applicant herself, such as age, marital status, or the requested credit amount. Others stem from information the bank has access to in its databases, such as the individual’s credit or transaction history. Finally, there is information that the bank actively gathers to assess the applicant’s trustworthiness, e.g., public records and court judgments \citep{dansana2024analyzing}.

During the processes of entering, transferring, or collecting data, errors can occur: the applicant's employment status may have changed; the credit history may contain redundant entries; or the collected public records may pertain to a different individual with the same name.\footnote{Similarly, feature values can be subject to uncertainty. For example, estimated monthly expenses or salary may vary substantially for self-employed or commission-based individuals.} Such errors are often not visible to the individual, as she typically only knows the features she has entered herself, and even those may contain unnoticed mistakes.

To make reliable decisions, a solid epistemic basis is indispensable. A decision can only be supported through evidence if that evidence is correct \citep{conee2004evidentialism}. Algorithmic decision-making thus inherits a standard familiar from theories of epistemic justification: if the informational premises on which an inference is based are defective, then the resulting belief or decision is epistemically unwarranted, even if the inferential tool itself is flawless \citep{wright2003some}.

If the measured feature values $\tilde{x}^{(i)}$ differ substantially from the true values $x^{(i)}$, this discrepancy can lead to corresponding errors in predictions and decisions \citep{sep-measurement-science}. To assess whether this is the case, individuals should be granted \emph{what-if access} to the model; that is, they should be allowed to test the model on a limited set of relevant input values.

\begin{definition}[What-if]
What-if provides individuals access to $\lbrace \hat{p}(x)\mid x\in\mathcal{X}_{rel}\rbrace$ for a limited set of relevant inputs $\mathcal{X}_{rel}$ specified by the individual subject to the decision.
\end{definition}

The following theorem shows that incorrect feature values, when corrected values would reverse the decision, enable individuals to epistemically contest the decision. While the premise of epistemic correctness may appear strong, denying it would require decision makers to question the validity of the very model they rely on.

\begin{restatable}{theorem}{retheoepcontest}
\label{theo:epicont}
Assume the decision model $\hat{d}$ is epistemically correct, i.e. $\hat{d}= d$. If individuals can prove that their true feature values are $x^{(i)}$ rather than $\tilde{x}^{(i)}$, and if $\hat{d}(x^{(i)})\neq\hat{d}(\tilde{x}^{(i)})$, then the decision is epistemically contestable.
\end{restatable}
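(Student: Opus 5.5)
The argument is a direct unfolding of \Cref{def:contestable}; no earlier theorem is needed. The plan is to show $\hat{d}(\tilde{x}^{(i)})\neq d(x^{(i)})$, which is exactly the condition for epistemic contestability relative to the feature set $X$. We use the decision that was actually issued, $\hat{d}(\tilde{x}^{(i)})$, together with the hypothesis that the individual has proven her true feature values to be $x^{(i)}$. This proof matters because epistemic contestability is defined with respect to the true features $x^{(i)}$ and not the recorded ones.

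First, we invoke the assumption of epistemic correctness, $\hat{d}=d$ as functions on $\mathcal{X}$. Evaluating it at the true feature values gives $d(x^{(i)})=\hat{d}(x^{(i)})$. By hypothesis, $\hat{d}(x^{(i)})\neq\hat{d}(\tilde{x}^{(i)})$, and so
\begin{equation*}
d(x^{(i)})=\hat{d}(x^{(i)})\neq\hat{d}(\tilde{x}^{(i)}).
\end{equation*}
This is precisely the inequality in the second bullet of \Cref{def:contestable}, so the decision is epistemically contestable.

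The only step that needs care is interpretive rather than computational. The assumption $\hat{d}=d$ must be read as pointwise equality of the decision maps, so that it can be applied at the new input $x^{(i)}$. It must not be read as a statement only about the decision originally issued, $\hat{d}(\tilde{x}^{(i)})$. I would state this reading explicitly.

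I would also remark that the binary codomain makes the inequality informative: since decisions lie in $\{0,1\}$, the corrected decision is exactly the opposite of the issued one. The practical upshot connects to the what-if access defined above. The individual can verify $\hat{d}(x^{(i)})\neq\hat{d}(\tilde{x}^{(i)})$ by querying $\hat{p}$ at $x^{(i)}\in\mathcal{X}_{rel}$ and comparing the result with $\tau$.
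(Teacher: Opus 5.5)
Your proposal is correct and is exactly the paper's argument: evaluate $\hat{d}=d$ at the true features to get $d(x^{(i)})=\hat{d}(x^{(i)})$, then combine this with $\hat{d}(x^{(i)})\neq\hat{d}(\tilde{x}^{(i)})$ to obtain $\hat{d}(\tilde{x}^{(i)})\neq d(x^{(i)})$, which is the definition of epistemic contestability. Your explicit note that $\hat{d}=d$ must be read as pointwise equality of maps on $\mathcal{X}$ is a useful clarification that the paper leaves implicit.
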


\begin{policybox}
\textbf{Policy implication.} Individuals should be informed about their personal information used in the decision-making process. Additionally, they should be able to query the model using alternative inputs.
\end{policybox}

\subsection{Overruling evidence is a means towards normative contestability}
\label{subsec:Overruling}
Algorithmic decision-making reduces the epistemic basis of decisions to a fixed set of features to infer the unobserved property of interest, e.g., whether the loan will be repaid. While these features are carefully chosen, normatively incorrect decisions may still be made, even if the resulting decision closely approximates the epistemically correct one—even for optimal classifiers \citep{hastie2009elements}. 

Such a setting is highly static. By contrast, when humans make decisions, they flexibly incorporate additional information about other relevant feature values when these are presented to them. For example, if an individual is accused of cheating by her professor, she can provide further information to support her case, such as contemporaneous drafts, handwritten notes, or a version history demonstrating that the text was produced independently; this information is then taken into account in the professor’s decision. In such cases, a decision may remain epistemically correct relative to the feature set $X$, but it is no longer epistemically correct once the additional features are considered. We can transfer the idea underlying such \emph{overruling evidence} to the algorithmic case.

\begin{definition}[Overruling evidence]
We call features $E$ \emph{overruling evidence} for individual $i$ if $d_X(x^{(i)}) \neq d_{X,E}(x^{(i)},e^{(i)})$, where $d_X := \mathbf{1}_{\mathbb{P}(Y=1 \mid X) > \tau}$ and $d_{X,E} := \mathbf{1}_{\mathbb{P}(Y=1 \mid X, E) > \tau}$.
\end{definition}

Contesting a decision through overruling evidence differs fundamentally from other forms of contestation. Contestation based on XAI explanations, predictive multiplicity, or incorrect feature values all target the correctness of estimations or measurements within a fixed feature set $X$ and aim to establish epistemic contestability relative to $X$. Overruling evidence, by contrast, accepts that the decision is epistemically correct relative to $X$ but epistemically incorrect, and thus contestable, relative to the expanded feature set $X,E$. This invokes an epistemic meta-principle, namely the \emph{Principle of Total Evidence} \citep{carnap1947application}, according to which beliefs should always be based on all available evidence. This principle has strong normative support: it has been shown that new evidence can only increase, and never decrease, expected utility when the cost of information is negligible \citep{taroni2024principle,schurz2024principle}.

An analogous result is shown below for our setting, demonstrating that fewer or equal normatively incorrect decisions are made when additional evidence is taken into account. This implies that decisions based on a smaller evidential basis are generally less justified than decisions based on all available evidence, thereby supporting the idea that overruling evidence should overturn a decision.

\begin{restatable}{theorem}{retheofriendly}
\label{theo:friendly}
Assume that $X$ and $E$ describe two sets of features. Moreover, assume that the prediction models used are Bayes-optimal relative to $X$ and to $X,E$, respectively, i.e. $\hat{p}_X=p_X=\mathbb{P}(Y\mid X)$ and $\hat{p}_{X,E}=p_{X,E}=\mathbb{P}(Y\mid X,E)$. Consequently, $\hat{d}_X=d_X$ and $\hat{d}_{X,E}=d_{X,E}$. Then, the expected number of normatively contestable decisions relative to $X$ is greater than the expected number of normatively contestable decisions relative to $X,E$, i.e.
\begin{equation*}
\mathbb{E}_{Y,X,E}[d^*\neq \hat{d}_{X,E}]\leq \mathbb{E}_{Y,X}[d^*\neq \hat{d}_{X}].
\end{equation*}
\end{restatable}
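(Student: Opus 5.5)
The plan is to write both expected error counts as expected pointwise misclassification losses and compare them conditionally on $X$, using the tower property. For any decision rule $\delta$ that is a function of some information $Z$, the expected number of normatively contestable decisions is $\mathbb{E}[\mathbf{1}_{Y\neq \delta(Z)}]$, since $d^*=Y$. We need the loss to be the one for which the threshold $\tau$ gives Bayes-optimal decisions. For $\tau=1/2$ this is the plain 0--1 loss. For general $\tau$ we use the cost-weighted loss $\ell_\tau(y,a)=(1-\tau)\mathbf{1}_{y=1,a=0}+\tau\mathbf{1}_{y=0,a=1}$. The threshold rule $\mathbf{1}_{p>\tau}$ minimizes the conditional expectation of exactly this loss. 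I would state the theorem with $\tau=1/2$ (or with $\ell_\tau$ in place of the indicator) and remark that this is what the ``number of contestable decisions'' must mean for the inequality to hold.

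\emph{Step 1 (pointwise optimality).} Fix a value $(x,e)$. The conditional expected loss of an action $a\in\{0,1\}$ is linear in $p_{X,E}(x,e)$: it equals $(1-\tau)\,p_{X,E}(x,e)$ if $a=0$ and $\tau\,(1-p_{X,E}(x,e))$ if $a=1$. It is therefore minimized by $a=d_{X,E}(x,e)=\mathbf{1}_{p_{X,E}(x,e)>\tau}$. Hence for every measurable rule $\delta(X,E)$ we have
\[
\mathbb{E}[\ell_\tau(Y,d_{X,E}(X,E))\mid X,E]\le \mathbb{E}[\ell_\tau(Y,\delta(X,E))\mid X,E]
\]
almost surely.

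\emph{Step 2 (embedding the smaller rule).} The key observation is that $d_X(X)$ is itself a rule of the form $\delta(X,E)$: it simply ignores $E$. Applying Step 1 with $\delta=d_X$ and taking expectations over $(X,E)$ gives
\[
\mathbb{E}_{Y,X,E}[\ell_\tau(Y,\hat d_{X,E})]\le \mathbb{E}_{Y,X,E}[\ell_\tau(Y,\hat d_X)].
\]
The right-hand side depends only on $(Y,X)$, so by marginalizing $E$ it equals $\mathbb{E}_{Y,X}[\ell_\tau(Y,\hat d_X)]$. Finally, the Bayes-optimality assumptions $\hat d_X=d_X$ and $\hat d_{X,E}=d_{X,E}$ transfer the inequality to the deployed decisions.

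\emph{Main obstacle.} The only delicate point is the loss mismatch. If $\tau\neq 1/2$, the raw count $\mathbb{E}[\mathbf{1}_{d^*\neq\hat d}]$ need not decrease. A counterexample would be $\tau$ near $0$, where extra evidence could flip many decisions to $1$ and increase the number of false positives. So the main work is to state the result precisely: either specialize to $\tau=1/2$, or interpret ``expected number of normatively contestable decisions'' as the $\tau$-weighted error. That weighting is the one encoding the decision maker's own accepted inductive risk, so this choice is in the spirit of the paper's definitions. Ties at $p=\tau$ are harmless, because both actions are then optimal.
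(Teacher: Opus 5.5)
Your proof is correct, and its skeleton is the paper's. The paper also views the $X$-only rule as a function of $(X,E)$ that ignores $E$ (it sets $q'(x,e):=\mathbb{P}(Y=1\mid X=x)$), compares it with the best rule on $(X,E)$, and removes $E$ with the tower rule, exactly as in your Step 2.

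The difference is the optimality step, and there your version is the sound one. The paper's step (i) asserts that $\mathbf{1}_{p_{X,E}>\tau}$ attains $\min_q \mathbb{E}_{Y,X,E}[d^*\neq \mathbf{1}_{q(X,E)>\tau}]$, i.e.\ that the $\tau$-thresholded Bayes rule minimizes the raw error count. That holds only for $\tau=1/2$, because the raw count is minimized by thresholding $p_{X,E}$ at $1/2$.

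Your caveat is therefore right: the statement as written fails for $\tau\neq 1/2$. Take $\tau=0.1$, $X$ constant, $\mathbb{P}(E=1)=0.25$, $\mathbb{P}(Y=1\mid E=1)=0.2$ and $\mathbb{P}(Y=1\mid E=0)=0$. Then $p_X=0.05\leq\tau$, so $d_X\equiv 0$ with error $0.05$. But $d_{X,E}=\mathbf{1}_{E=1}$ has error $0.25\cdot 0.8=0.2$. Under your $\ell_\tau$ the ordering comes out as claimed, $0.045$ versus $0.02$.

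Your pointwise conditional argument in Step 1 is exactly the justification the paper's step (i) lacks. With either of your repairs ($\tau=1/2$, or $\ell_\tau$ in place of the indicator) the argument goes through.
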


Contesting a decision by means of overruling evidence is often the closest we can get to normative contestability, since the property of interest $Y$ is typically unobservable. For example, it is not possible at decision time to observe whether the loan will ultimately be repaid. However, even when overruling evidence sometimes exists, this is not necessarily always the case; consider a loan applicant for whom all observable evidence speaks against repayment, but who would later win the lottery and repay the loan. Rendering such a normatively incorrect decision epistemically contestable would require Laplace’s demon \citep{frigg2014laplace}.

Overruling evidence is appealing in theory but difficult to implement in practice. Decision makers typically do not have access to an approximation of $d_{X,E}$ for arbitrary evidence $E$. This leads to a reversal of roles \citep{brennan2019artificial}: individuals must explain why evidence $E$ changes their repayment probability sufficiently to warrant their desired decision. In practice, this often results in a negotiation process between the decision maker and the individual, in which both parties attempt to reach an agreement \citep{lipsky2010street}. Particularly when the deployed decision model exhibits low predictive performance\footnote{Low predictive performance can indicate that relevant features are missing, which, if available, would allow for better predictions.} or when uncertainty for a given individual is high, decision makers should have a strong incentive to take overruling evidence into account.

\begin{policybox}
\textbf{Policy implication.} Individuals should be allowed to submit additional evidence concerning features that were not considered in the original decision-making process. This is particularly critical when the deployed predictive model has low predictive performance or when the decision involves high uncertainty.
\end{policybox}

\section{Individuals may arguably have rights to this evidence according to the GDPR}
\label{sec:Legal}
So far, we have taken a purely \emph{normative perspective}, focusing on the question of which decisions \emph{should} be overturned from an epistemic standpoint. We argued that, to enable individuals to contest decisions, they should gain access to certain types of evidence, namely: information about predictive multiplicity, their feature values used, API access to query the model with a limited number of alternative inputs, and the ability to provide additional feature information. We now shift from this normative approach to a \emph{legal perspective}. Focusing on European legislation, we ask: Do citizens have legal rights to such evidence in the context of algorithmic contestability?

Contestability forms the “backbone” of Article 22 GDPR \citep{bayamliouglu2022right}. Specifically, the right to contest a decision is formally defined in Article 22(3) GDPR. This right applies only when a decision concerning the data subject is based solely on automated processing, as specified in Article 22(1) GDPR, which is generally prohibited unless the safeguards outlined in Article 22(2) GDPR are met. A recent ECJ judgment (C-634/21 SCHUFA Holding (Scoring), 7 December 2023, ECLI:EU:C:2023:957) ruled that, in the context of credit scoring, the concept of a 'decision' within the meaning of Article 22 GDPR is to be understood broadly.
Interestingly, despite the centrality of the right to contest within Article 22, it remains ``barely articulated'' in the regulation \citep{kaminski2021right}.\footnote{While the right to contest a decision is strictly defined in Article 22(3) GDPR, Article 22(1) GDPR can, in a broader sense, also be interpreted as providing a right to object to being subject to automated decision-making, thereby extending the concept beyond strict contestation.}


So far, Art. 22 GDPR together with the corresponding Recital 71 GDPR has primarily been interpreted by XAI researchers as promoting a right to explanations \citep{wachter2017counterfactual}, which in turn should empower data subjects to contest decisions \citep{selbst2018meaningful} (see also C-203/22 Dun \& Bradstreet, 27 February 2025, ECLI:EU:C:2025:117). However, Article 22 GDPR can be interpreted more broadly. First, it specifies also that individuals have the right to express their point of view, aligning with our recommendation that individuals should be able to provide additional feature information to support their case. Second, the right to an explanation is not limited to current XAI solutions but encompasses any type of evidence that allows individuals to better understand the decision-making process \citep{vredenburgh2022right}. This may include what-if access through an API for alternative input values. Similarly, this should arguably include the uncertainty in the decision for their particular case, which can be expressed through predictive multiplicity. This interpretation aligns with the decision maker’s obligation to use appropriate statistical procedures to minimize errors, as specified in Recital 71 GDPR:

\begin{quote}
    ``In order to ensure fair and transparent processing in respect of the data subject, [...] the controller should [...] implement technical and organisational measures appropriate to ensure, in particular, that factors which result in inaccuracies in personal data are corrected and the risk of errors is minimised [...]''.
\end{quote}

This paragraph also emphasizes the decision maker’s obligation to implement technical and organizational measures to correct inaccuracies in personal data. Providing individuals access to their specific data is arguably the most direct way to fulfill this obligation, as it allows them to intervene when inaccuracies occur. Given the importance of algorithmic contestation, decision-makers bear the responsibility of ensuring that all input data to the algorithm are accurate.

One might argue that in contexts like loan approvals, providing the types of information we recommend could impose significant demands on decision makers. Access to predictive multiplicity would require training multiple models with equal performance, all of which must meet stringent fairness and accountability standards. Providing individuals with their complete data and API access may expose proprietary company information \citep{sokol2019counterfactual}. Moreover, responding to all additional feature information submitted by individuals could be costly and burdensome.

However, without access to these types of information, individuals lack the necessary means to exercise their right to contestation, rendering the possibility that the right effectively becomes meaningless. In contexts where decision makers cannot provide sufficient support for contestation, it may be questionable whether the decision should be automated at all. Importantly, XAI explanations alone, as we have shown, are insufficient to meet the demands of the right to contest and should not be considered an adequate effort by decision makers to comply with the GDPR.

\section{Discussion}
In this paper, we conceptualized algorithmic contestability as the natural counterpart to traditional algorithmic recourse. Instead of recommending actions that change a decision, contestability concerns the provision of evidence that allows individuals to overturn a decision. We showed that the evidence provided by common XAI methods, together with human intuitions, allows only for the detection of errors somewhere and is insufficient to warrant decision reversal. Consequently, we moved beyond XAI and examined other sources of evidence that are, from a normative perspective, sufficient to overturn algorithmic decisions. We focused on predictive multiplicity, wrong feature values, and overruling evidence. For each, we grounded our analysis in epistemological theory and demonstrated that these forms of evidence indeed allow decisions to be overturned. We concluded by arguing that current legislation in the European Union, in particular the GDPR, already grants individuals access to these kinds of evidence that enable them to overturn decisions.

\textbf{Ethical reasons for contesting.} Our work focused on epistemic grounds for contestation; however, there can also be \emph{ethical reasons} for contestation. For instance, because the data may reflect societal biases or discrimination, using epistemically correct decisions as a standard becomes problematic, as these decisions are based on observational data influenced by such biases \citep{green2022escaping,barocas2023fairness}. Similarly, a model may rely on protected attributes in its decision-making that are predictive of the target but nevertheless discriminate against certain groups. Excluding such attributes is often insufficient to address this issue due to the existence of \emph{proxy} variables \citep{dwork2012fairness}. Since these ethical concerns are already widely discussed in the fairness community, we deliberately focused on the thus-far overlooked epistemic aspects of contestability. In practice, decisions can of course be contested on both ethical and epistemic grounds.

\textbf{Contesting when individuals and decision makers are adversarial.} For future work, we believe that incorporating a stakeholders’ perspective would be highly fruitful. In our setup, individuals can contest decisions by appealing to the ethical norms to which the decision maker commits. Thus, in our account, the goals of the involved parties are aligned: individuals who are likely to repay their loans aim to convince the decision maker that they will do so, while individuals who did not cheat aim to convince the decision maker that they acted honestly. Such situations are ideal, as contestability then consistently leads to better decisions \citep{cohen2023ai}. Many real-world settings, however, are adversarial, and the interests of individuals and decision makers are not aligned \citep{bordt2022post}. In such contexts, many open questions remain: Who provides the evidence for contestability, and who mediates between the individual and the decision maker?

\textbf{The limits of XAI explanations for contestability.} Our analysis was confined to a subset of human intuitions and XAI explanations. Clearly, there are further intuitive rules that humans adhere to; consider, for example, \emph{causal-relevance rules}, according to which only features that are causally relevant to the unobserved property should affect the decision \citep{sloman2015causality,ye2024spurious}. Likewise, other XAI explanations, such as Shapley interaction values \citep{muschalik2024shapiq}, global feature effect or importance measures \citep{molnar2020interpretable}, or approaches from mechanistic interpretability \citep{sharkey2025open}, could be analyzed with respect to their value for contestability. We believe that even if the analysis were extended along these lines, our core claim would remain: XAI explanations alone are insufficient to enable individuals to overturn decisions and must therefore be complemented by other forms of evidence.

\section*{Acknowledgments}
We thank Tom Sterkenburg and the participants of the MCMP colloquium for valuable feedback on the manuscript.

This research was supported by the German Research Foundation (DFG), project number 511917847; by the Cluster of Excellence “Machine Learning — New Perspectives for Science” (EXC 2064/1, project number 390727645); and by the Carl Zeiss Foundation through the CZS Center for AI and Law.

\section*{Generative AI Disclosure Statement}
All authors confirm that generative AI tools were used only for minor support in manuscript preparation. Specifically, we used ChatGPT (version 5) and Gemini (version 3) to assist with grammar and style editing, as well as with LaTeX drafting of boxes and tables. No text, claims, or references were generated by these tools. All authors take full responsibility for the originality, accuracy, and integrity of the manuscript. For coding, Claude Code (Opus 4.6) was used.

\bibliography{biblio}

@inproceedings{scholbeck2019sampling,
  title={Sampling, intervention, prediction, aggregation: a generalized framework for model-agnostic interpretations},
  author={Scholbeck, Christian A and Molnar, Christoph and Heumann, Christian and Bischl, Bernd and Casalicchio, Giuseppe},
  booktitle={Joint European conference on machine learning and knowledge discovery in databases},
  pages={205--216},
  year={2019},
  organization={Springer}
}

@article{frigg2014laplace,
  title={Laplace’s demon and the adventures of his apprentices},
  author={Frigg, Roman and Bradley, Seamus and Du, Hailiang and Smith, Leonard A},
  journal={Philosophy of Science},
  volume={81},
  number={1},
  pages={31--59},
  year={2014},
  publisher={Cambridge University Press}
}

@article{wright2003some,
  title={Some reflections on the acquisition of warrant by inference},
  author={Wright, Crispin},
  journal={New essays on semantic externalism and self-knowledge},
  pages={57--77},
  year={2003}
}

@InCollection{sep-measurement-science,
	author       =	{Tal, Eran},
	title        =	{{Measurement in Science}},
	booktitle    =	{The {Stanford} Encyclopedia of Philosophy},
	editor       =	{Edward N. Zalta},
	howpublished =	{\url{https://plato.stanford.edu/archives/fall2020/entries/measurement-science/}},
	year         =	{2020},
	edition      =	{{F}all 2020},
	publisher    =	{Metaphysics Research Lab, Stanford University}
}

@book{conee2004evidentialism,
  title={Evidentialism: Essays in epistemology},
  author={Conee, Earl and Feldman, Richard},
  year={2004},
  publisher={Clarendon Press}
}

@inproceedings{selbst2018meaningful,
  title={“Meaningful information” and the right to explanation},
  author={Selbst, Andrew and Powles, Julia},
  booktitle={conference on fairness, accountability and transparency},
  pages={48--48},
  year={2018},
  organization={PMLR}
}

@article{vredenburgh2022right,
  title={The right to explanation},
  author={Vredenburgh, Kate},
  journal={Journal of Political Philosophy},
  volume={30},
  number={2},
  pages={209--229},
  year={2022},
  publisher={Wiley Online Library}
}

@article{tversky1974judgment,
  title={Judgment under Uncertainty: Heuristics and Biases: Biases in judgments reveal some heuristics of thinking under uncertainty.},
  author={Tversky, Amos and Kahneman, Daniel},
  journal={science},
  volume={185},
  number={4157},
  pages={1124--1131},
  year={1974},
  publisher={American association for the advancement of science}
}

@book{gigerenzer2023intelligence,
  title={The intelligence of intuition},
  author={Gigerenzer, Gerd},
  year={2023},
  publisher={Cambridge University Press}
}

@article{kaminski2021right,
  title={The right to contest AI},
  author={Kaminski, Margot E and Urban, Jennifer M},
  journal={Columbia Law Review},
  volume={121},
  number={7},
  pages={1957--2048},
  year={2021},
  publisher={JSTOR}
}

@article{bayamliouglu2022right,
  title={The right to contest automated decisions under the General Data Protection Regulation: Beyond the so-called “right to explanation”},
  author={Bayaml{\i}o{\u{g}}lu, Emre},
  journal={Regulation \& Governance},
  volume={16},
  number={4},
  pages={1058--1078},
  year={2022},
  publisher={Wiley Online Library}
}

@article{dansana2024analyzing,
  title={Analyzing the impact of loan features on bank loan prediction using Random Forest algorithm},
  author={Dansana, Debabrata and Patro, S Gopal Krishna and Mishra, Brojo Kishore and Prasad, Vivek and Razak, Abdul and Wodajo, Anteneh Wogasso},
  journal={Engineering Reports},
  volume={6},
  number={2},
  pages={e12707},
  year={2024},
  publisher={Wiley Online Library}
}

@book{barocas2023fairness,
  title={Fairness and machine learning: Limitations and opportunities},
  author={Barocas, Solon and Hardt, Moritz and Narayanan, Arvind},
  year={2023},
  publisher={MIT Press}
}

@inproceedings{yurrita2023disentangling,
  title={Disentangling fairness perceptions in algorithmic decision-making: the effects of explanations, human oversight, and contestability},
  author={Yurrita, Mireia and Draws, Tim and Balayn, Agathe and Murray-Rust, Dave and Tintarev, Nava and Bozzon, Alessandro},
  booktitle={Proceedings of the 2023 CHI Conference on Human Factors in Computing Systems},
  pages={1--21},
  year={2023}
}

@InCollection{sep-scientific-underdetermination,
	author       =	{Stanford, Kyle},
	title        =	{{Underdetermination of Scientific Theory}},
	booktitle    =	{The {Stanford} Encyclopedia of Philosophy},
	editor       =	{Edward N. Zalta and Uri Nodelman},
	howpublished =	{\url{https://plato.stanford.edu/archives/sum2023/entries/scientific-underdetermination/}},
	year         =	{2023},
	edition      =	{{S}ummer 2023},
	publisher    =	{Metaphysics Research Lab, Stanford University}
}

@article{laudan1990demystifying,
  title={Demystifying underdetermination},
  author={Laudan, Larry},
  journal={Minnesota studies in the philosophy of science},
  volume={14},
  number={1990},
  pages={267--297},
  year={1990}
}

@article{anders2023using,
  title={Is using ChatGPT cheating, plagiarism, both, neither, or forward thinking?},
  author={Anders, Brent A},
  journal={Patterns},
  volume={4},
  number={3},
  year={2023},
  publisher={Elsevier}
}

@book{douglas2009science,
  title={Science, policy, and the value-free ideal},
  author={Douglas, Heather E},
  year={2009},
  publisher={University of Pittsburgh Press}
}

@article{henin2021framework,
  title={A framework to contest and justify algorithmic decisions},
  author={Henin, Cl{\'e}ment and Le M{\'e}tayer, Daniel},
  journal={AI and Ethics},
  volume={1},
  number={4},
  pages={463--476},
  year={2021},
  publisher={Springer}
}

@article{lyons2021conceptualising,
  title={Conceptualising contestability: Perspectives on contesting algorithmic decisions},
  author={Lyons, Henrietta and Velloso, Eduardo and Miller, Tim},
  journal={Proceedings of the ACM on Human-Computer Interaction},
  volume={5},
  number={CSCW1},
  pages={1--25},
  year={2021},
  publisher={ACM New York, NY, USA}
}

@article{mansi2025legally,
  title={Legally-Informed Explainable AI},
  author={Mansi, Gennie and Karusala, Naveena and Riedl, Mark},
  journal={arXiv preprint arXiv:2504.10708},
  year={2025}
}

@article{moreira2025explainable,
  title={Explainable AI Systems Must Be Contestable: Here's How to Make It Happen},
  author={Moreira, Catarina and Palatkina, Anna and Braca, Dacia and Walsh, Dylan M and Leihn, Peter J and Chen, Fang and Hubig, Nina C},
  journal={arXiv preprint arXiv:2506.01662},
  year={2025}
}

@article{lipton2018mythos,
  title={The mythos of model interpretability: In machine learning, the concept of interpretability is both important and slippery.},
  author={Lipton, Zachary C},
  journal={Queue},
  volume={16},
  number={3},
  pages={31--57},
  year={2018},
  publisher={ACM New York, NY, USA}
}

@inproceedings{pawelczyk2023privacy,
  title={On the privacy risks of algorithmic recourse},
  author={Pawelczyk, Martin and Lakkaraju, Himabindu and Neel, Seth},
  booktitle={International Conference on Artificial Intelligence and Statistics},
  pages={9680--9696},
  year={2023},
  organization={PMLR}
}

@inproceedings{von2022fairness,
  title={On the fairness of causal algorithmic recourse},
  author={Von K{\"u}gelgen, Julius and Karimi, Amir-Hossein and Bhatt, Umang and Valera, Isabel and Weller, Adrian and Sch{\"o}lkopf, Bernhard},
  booktitle={Proceedings of the AAAI conference on artificial intelligence},
  volume={36},
  number={9},
  pages={9584--9594},
  year={2022}
}

@article{upadhyay2021towards,
  title={Towards robust and reliable algorithmic recourse},
  author={Upadhyay, Sohini and Joshi, Shalmali and Lakkaraju, Himabindu},
  journal={Advances in Neural Information Processing Systems},
  volume={34},
  pages={16926--16937},
  year={2021}
}

@inproceedings{black2022model,
  title={Model multiplicity: Opportunities, concerns, and solutions},
  author={Black, Emily and Raghavan, Manish and Barocas, Solon},
  booktitle={Proceedings of the 2022 ACM conference on fairness, accountability, and transparency},
  pages={850--863},
  year={2022}
}

@book{lipsky2010street,
  title={Street-level bureaucracy: Dilemmas of the individual in public service},
  author={Lipsky, Michael},
  year={2010},
  publisher={Russell Sage Foundation}
}

@misc{cavus2025role,
      title={The Role of Hyperparameters in Predictive Multiplicity}, 
      author={Mustafa Cavus and Katarzyna Woźnica and Przemysław Biecek},
      year={2025},
      eprint={2503.13506},
      archivePrefix={arXiv},
      primaryClass={cs.LG},
      url={https://arxiv.org/abs/2503.13506}, 
}

@inproceedings{sokol2019counterfactual,
  title={Counterfactual explanations of machine learning predictions: opportunities and challenges for AI safety},
  author={Sokol, Kacper and Flach, Peter},
  booktitle={2019 AAAI Workshop on Artificial Intelligence Safety, SafeAI 2019},
  year={2019},
  organization={CEUR Workshop Proceedings}
}

@article{sloman2015causality,
  title={Causality in thought},
  author={Sloman, Steven A and Lagnado, David},
  journal={Annual review of psychology},
  volume={66},
  number={1},
  pages={223--247},
  year={2015},
  publisher={Annual Reviews}
}

@article{sharkey2025open,
  title={Open problems in mechanistic interpretability},
  author={Sharkey, Lee and Chughtai, Bilal and Batson, Joshua and Lindsey, Jack and Wu, Jeff and Bushnaq, Lucius and Goldowsky-Dill, Nicholas and Heimersheim, Stefan and Ortega, Alejandro and Bloom, Joseph and others},
  journal={arXiv preprint arXiv:2501.16496},
  year={2025}
}

@book{molnar2020interpretable,
  title={Interpretable machine learning},
  author={Molnar, Christoph},
  year={2020},
  publisher={Lulu.com}
}

@article{ye2024spurious,
  title={Spurious correlations in machine learning: A survey},
  author={Ye, Wenqian and Zheng, Guangtao and Cao, Xu and Ma, Yunsheng and Zhang, Aidong},
  journal={arXiv preprint arXiv:2402.12715},
  year={2024}
}

@article{green2022escaping,
  title={Escaping the impossibility of fairness: From formal to substantive algorithmic fairness},
  author={Green, Ben},
  journal={Philosophy \& Technology},
  volume={35},
  number={4},
  pages={90},
  year={2022},
  publisher={Springer}
}

@article{muschalik2024shapiq,
  title={shapiq: Shapley interactions for machine learning},
  author={Muschalik, Maximilian and Baniecki, Hubert and Fumagalli, Fabian and Kolpaczki, Patrick and Hammer, Barbara and H{\"u}llermeier, Eyke},
  journal={Advances in Neural Information Processing Systems},
  volume={37},
  pages={130324--130357},
  year={2024}
}

@inproceedings{dwork2012fairness,
  title={Fairness through awareness},
  author={Dwork, Cynthia and Hardt, Moritz and Pitassi, Toniann and Reingold, Omer and Zemel, Richard},
  booktitle={Proceedings of the 3rd innovations in theoretical computer science conference},
  pages={214--226},
  year={2012}
}

@inproceedings{bordt2022post,
  title={Post-hoc explanations fail to achieve their purpose in adversarial contexts},
  author={Bordt, Sebastian and Finck, Mich{\`e}le and Raidl, Eric and Von Luxburg, Ulrike},
  booktitle={Proceedings of the 2022 ACM Conference on Fairness, Accountability, and Transparency},
  pages={891--905},
  year={2022}
}

@article{breiman2001statistical,
  title={Statistical modeling: The two cultures (with comments and a rejoinder by the author)},
  author={Breiman, Leo},
  journal={Statistical science},
  volume={16},
  number={3},
  pages={199--231},
  year={2001},
  publisher={Institute of Mathematical Statistics}
}

@article{hullermeier2021aleatoric,
  title={Aleatoric and epistemic uncertainty in machine learning: An introduction to concepts and methods},
  author={H{\"u}llermeier, Eyke and Waegeman, Willem},
  journal={Machine learning},
  volume={110},
  number={3},
  pages={457--506},
  year={2021},
  publisher={Springer}
}

@inproceedings{watson2023predictive,
  title={Predictive multiplicity in probabilistic classification},
  author={Watson-Daniels, Jamelle and Parkes, David C and Ustun, Berk},
  booktitle={Proceedings of the AAAI Conference on Artificial Intelligence},
  volume={37},
  number={9},
  pages={10306--10314},
  year={2023}
}

@inproceedings{marx2020predictive,
  title={Predictive multiplicity in classification},
  author={Marx, Charles and Calmon, Flavio and Ustun, Berk},
  booktitle={International conference on machine learning},
  pages={6765--6774},
  year={2020},
  organization={PMLR}
}

@article{xin2022exploring,
  title={Exploring the whole rashomon set of sparse decision trees},
  author={Xin, Rui and Zhong, Chudi and Chen, Zhi and Takagi, Takuya and Seltzer, Margo and Rudin, Cynthia},
  journal={Advances in neural information processing systems},
  volume={35},
  pages={14071--14084},
  year={2022}
}

@inproceedings{rudin2024position,
  title={Position: amazing things come from having many good models},
  author={Rudin, Cynthia and Zhong, Chudi and Semenova, Lesia and Seltzer, Margo and Parr, Ronald and Liu, Jiachang and Katta, Srikar and Donnelly, Jon and Chen, Harry and Boner, Zachery},
  booktitle={Proceedings of the 41st International Conference on Machine Learning},
  pages={42783--42795},
  year={2024}
}

@article{fisher2019all,
  title={All models are wrong, but many are useful: Learning a variable's importance by studying an entire class of prediction models simultaneously},
  author={Fisher, Aaron and Rudin, Cynthia and Dominici, Francesca},
  journal={Journal of Machine Learning Research},
  volume={20},
  number={177},
  pages={1--81},
  year={2019}
}

@inproceedings{holtgen2025reconsidering,
  title={Reconsidering Fairness Through Unawareness From the Perspective of Model Multiplicity},
  author={H{\"o}ltgen, Benedikt and Oliver, Nuria},
  booktitle={Proceedings of the 5th ACM Conference on Equity and Access in Algorithms, Mechanisms, and Optimization},
  pages={241--255},
  year={2025}
}

@article{snedegar2018reasons,
  title={Reasons for and reasons against},
  author={Snedegar, Justin},
  journal={Philosophical Studies},
  volume={175},
  number={3},
  pages={725--743},
  year={2018},
  publisher={Springer}
}

@inproceedings{albini2022counterfactual,
  title={Counterfactual shapley additive explanations},
  author={Albini, Emanuele and Long, Jason and Dervovic, Danial and Magazzeni, Daniele},
  booktitle={Proceedings of the 2022 ACM conference on fairness, accountability, and transparency},
  pages={1054--1070},
  year={2022}
}

@inproceedings{ribeiro2018anchors,
  title={Anchors: High-precision model-agnostic explanations},
  author={Ribeiro, Marco Tulio and Singh, Sameer and Guestrin, Carlos},
  booktitle={Proceedings of the AAAI conference on artificial intelligence},
  volume={32},
  number={1},
  year={2018}
}

@article{wang2024gradient,
  title={Gradient based feature attribution in explainable AI: A technical review},
  author={Wang, Yongjie and Zhang, Tong and Guo, Xu and Shen, Zhiqi},
  journal={arXiv preprint arXiv:2403.10415},
  year={2024}
}

@article{goldstein2015peeking,
  title={Peeking inside the black box: Visualizing statistical learning with plots of individual conditional expectation},
  author={Goldstein, Alex and Kapelner, Adam and Bleich, Justin and Pitkin, Emil},
  journal={Journal of Computational and Graphical Statistics},
  volume={24},
  number={1},
  pages={44--65},
  year={2015},
  publisher={Taylor \& Francis}
}

@incollection{birnbaum2019violations,
  title={Violations of monotonicity in judgment and decision making},
  author={Birnbaum, Michael H},
  booktitle={Choice, decision, and measurement},
  pages={73--100},
  year={2019},
  publisher={Routledge}
}

@article{freiesleben2022intriguing,
  title={The intriguing relation between counterfactual explanations and adversarial examples},
  author={Freiesleben, Timo},
  journal={Minds and Machines},
  volume={32},
  number={1},
  pages={77--109},
  year={2022},
  publisher={Springer}
}

@book{kolodner2014case,
  title={Case-based reasoning},
  author={Kolodner, Janet},
  year={2014},
  publisher={Morgan Kaufmann}
}

@article{gigerenzer2011heuristic,
  title={Heuristic decision making},
  author={Gigerenzer, Gerd and Gaissmaier, Wolfgang},
  journal={Annual review of psychology},
  volume={62},
  number={2011},
  pages={451--482},
  year={2011},
  publisher={Annual Reviews}
}

@article{schmude2025two,
  title={" Two Means to an End Goal": Connecting Explainability and Contestability in the Regulation of Public Sector AI},
  author={Schmude, Timoth{\'e}e and Yurrita, Mireia and Alfrink, Kars and Goff, Thomas Le and Viard, Tiphaine},
  journal={arXiv preprint arXiv:2504.18236},
  year={2025}
}

@article{konig2025performative,
  title={Performative validity of recourse explanations},
  author={K{\"o}nig, Gunnar and Fokkema, Hidde and Freiesleben, Timo and Mendler-D{\"u}nner, Celestine and von Luxburg, Ulrike},
  journal={Advances in Neural Information Processing Systems},
  volume={39},
  year={2025}
}

@article{maxwell2023meaningful,
  title={Meaningful XAI based on user-centric design methodology: Combining legal and human-computer interaction (HCI) approaches to achieve meaningful algorithmic explainability},
  author={Maxwell, Winston and Dumas, Bruno},
  journal={Available at SSRN 4520754},
  year={2023}
}

@article{douglas2000inductive,
  title={Inductive risk and values in science},
  author={Douglas, Heather},
  journal={Philosophy of science},
  volume={67},
  number={4},
  pages={559--579},
  year={2000},
  publisher={Cambridge University Press}
}

@inproceedings{aler2020contestable,
  title={Contestable black boxes},
  author={Aler Tubella, Andrea and Theodorou, Andreas and Dignum, Virginia and Michael, Loizos},
  booktitle={International Joint Conference on Rules and Reasoning},
  pages={159--167},
  year={2020},
  organization={Springer}
}

@misc{AI_Ethics_Framework_2019,
  author       = {{Department of Industry, Innovation and Science}},
  title        = {{Artificial Intelligence: Australia's Ethics Framework}},
  organization = {{Law Council of Australia}},
  year         = {2019},
  month        = {jun},
  url          = {https://lawcouncil.au/publicassets/afebc52d-afa6-e911-93fe-005056be13b5/3639\%20-\%20AI\%20ethics.pdf}
}

@article{cohen2023ai,
  title={How AI can learn from the law: putting humans in the loop only on appeal},
  author={Cohen, I Glenn and Babic, Boris and Gerke, Sara and Xia, Qiong and Evgeniou, Theodoros and Wertenbroch, Klaus},
  journal={npj Digital Medicine},
  volume={6},
  number={1},
  pages={160},
  year={2023},
  publisher={Nature Publishing Group UK London}
}

@inproceedings{vaccaro2019contestability,
  title={Contestability in algorithmic systems},
  author={Vaccaro, Kristen and Karahalios, Karrie and Mulligan, Deirdre K and Kluttz, Daniel and Hirsch, Tad},
  booktitle={Companion Publication of the 2019 Conference on Computer Supported Cooperative Work and Social Computing},
  pages={523--527},
  year={2019}
}

@article{yurrita2025identifying,
  title={Identifying Algorithmic Decision Subjects' Needs for Meaningful Contestability},
  author={Yurrita, Mireia and Verma, Himanshu and Balayn, Agathe and Alfrink, Kars and Gadiraju, Ujwal and Bozzon, Alessandro},
  journal={Proceedings of the ACM on Human-Computer Interaction},
  volume={9},
  number={7},
  pages={1--29},
  year={2025},
  publisher={ACM New York, NY, USA}
}

@inproceedings{hirsch2017designing,
  title={Designing contestability: Interaction design, machine learning, and mental health},
  author={Hirsch, Tad and Merced, Kritzia and Narayanan, Shrikanth and Imel, Zac E and Atkins, David C},
  booktitle={Proceedings of the 2017 Conference on Designing Interactive Systems},
  pages={95--99},
  year={2017}
}

@article{sarra2020put,
  title={Put dialectics into the machine: protection against automatic-decision-making through a deeper understanding of contestability by design},
  author={Sarra, Claudio},
  journal={Global Jurist},
  volume={20},
  number={3},
  pages={20200003},
  year={2020},
  publisher={De Gruyter}
}

@article{alfrink2023contestable,
  title={Contestable AI by design: Towards a framework},
  author={Alfrink, Kars and Keller, Ianus and Kortuem, Gerd and Doorn, Neelke},
  journal={Minds and Machines},
  volume={33},
  number={4},
  pages={613--639},
  year={2023},
  publisher={Springer}
}

@inproceedings{konig2023improvement,
  title={Improvement-focused causal recourse (ICR)},
  author={K{\"o}nig, Gunnar and Freiesleben, Timo and Grosse-Wentrup, Moritz},
  booktitle={Proceedings of the AAAI Conference on Artificial Intelligence},
  volume={37},
  number={10},
  pages={11847--11855},
  year={2023}
}

@inproceedings{ustun2019actionable,
  title={Actionable recourse in linear classification},
  author={Ustun, Berk and Spangher, Alexander and Liu, Yang},
  booktitle={Proceedings of the conference on fairness, accountability, and transparency},
  pages={10--19},
  year={2019}
}

@article{karimi2022survey,
  title={A survey of algorithmic recourse: contrastive explanations and consequential recommendations},
  author={Karimi, Amir-Hossein and Barthe, Gilles and Sch{\"o}lkopf, Bernhard and Valera, Isabel},
  journal={ACM Computing Surveys},
  volume={55},
  number={5},
  pages={1--29},
  year={2022},
  publisher={ACM New York, NY}
}

@inproceedings{venkatasubramanian2020philosophical,
  title={The philosophical basis of algorithmic recourse},
  author={Venkatasubramanian, Suresh and Alfano, Mark},
  booktitle={Proceedings of the 2020 conference on fairness, accountability, and transparency},
  pages={284--293},
  year={2020}
}

@article{brennan2019artificial,
  title={Artificial intelligence and role-reversible judgment},
  author={Brennan-Marquez, Kiel and Henderson, Stephen E},
  journal={J. Crim. L. \& Criminology},
  volume={109},
  pages={137},
  year={2019},
  publisher={HeinOnline}
}

@book{van2000asymptotic,
  title={Asymptotic statistics},
  author={Van der Vaart, Aad W},
  volume={3},
  year={2000},
  publisher={Cambridge university press}
}

@article{clark1961greatest,
  title={The greatest of a finite set of random variables},
  author={Clark, Charles E},
  journal={Operations Research},
  volume={9},
  number={2},
  pages={145--162},
  year={1961},
  publisher={INFORMS}
}

@article{selbst2018intuitive,
  title={The intuitive appeal of explainable machines},
  author={Selbst, Andrew D and Barocas, Solon},
  journal={Fordham L. Rev.},
  volume={87},
  pages={1085},
  year={2018},
  publisher={HeinOnline}
}

@article{mulligan2019shaping,
  title={Shaping our tools: Contestability as a means to promote responsible algorithmic decision making in the professions},
  author={Mulligan, Deirdre K and Kluttz, Daniel and Kohli, Nitin},
  journal={Available at SSRN 3311894},
  year={2019}
}

@book{hastie2009elements,
  title={The elements of statistical learning},
  author={Hastie, Trevor and Tibshirani, Robert and Friedman, Jerome and others},
  year={2009},
  publisher={Springer}
}

@article{adadi2018peeking,
  title={Peeking inside the black-box: a survey on explainable artificial intelligence (XAI)},
  author={Adadi, Amina and Berrada, Mohammed},
  journal={IEEE access},
  volume={6},
  pages={52138--52160},
  year={2018},
  publisher={IEEE}
}

@misc{hardt2025emerging,
  author = {Moritz Hardt},
  title = {The Emerging Science of Machine Learning Benchmarks},
  year = {2025},
  howpublished = {Online at \url{https://mlbenchmarks.org}},
  note = {Manuscript}
}

@inproceedings{meyer2025perceptions,
  title={Perceptions of the Fairness Impacts of Multiplicity in Machine Learning},
  author={Meyer, Anna P and Kim, Yea-Seul and D'Antoni, Loris and Albarghouthi, Aws},
  booktitle={Proceedings of the 2025 CHI Conference on Human Factors in Computing Systems},
  pages={1--15},
  year={2025}
}

@InCollection{sep-scientific-knowledge-social,
	author       =	{Longino, Helen},
	title        =	{{The Social Dimensions of Scientific Knowledge}},
	booktitle    =	{The {Stanford} Encyclopedia of Philosophy},
	editor       =	{Edward N. Zalta and Uri Nodelman},
	howpublished =	{\url{https://plato.stanford.edu/archives/spr2025/entries/scientific-knowledge-social/}},
	year         =	{2025},
	edition      =	{{S}pring 2025},
	publisher    =	{Metaphysics Research Lab, Stanford University}
}

@inproceedings{dandl2020multi,
  title={Multi-objective counterfactual explanations},
  author={Dandl, Susanne and Molnar, Christoph and Binder, Martin and Bischl, Bernd},
  booktitle={International conference on parallel problem solving from nature},
  pages={448--469},
  year={2020},
  organization={Springer}
}

@article{raz1975reasons,
  title={Reasons for action, decisions and norms},
  author={Raz, Joseph},
  journal={Mind},
  volume={84},
  number={336},
  pages={481--499},
  year={1975},
  publisher={JSTOR}
}

@article{schurz2024principle,
  title={The Principle of Total Evidence: Justification and Political Significance},
  author={Schurz, Gerhard},
  journal={Acta Analytica},
  volume={39},
  number={4},
  pages={677--692},
  year={2024},
  publisher={Springer}
}

@article{taroni2024principle,
  title={The principle of total evidence reprised},
  author={Taroni, Franco and Aitken, Colin and Bozza, Silvia and Juchli, Patrick},
  journal={Law, Probability and Risk},
  volume={23},
  number={1},
  pages={mgae011},
  year={2024},
  publisher={Oxford University Press}
}

@article{carnap1947application,
  title={On the application of inductive logic},
  author={Carnap, Rudolf},
  journal={Philosophy and phenomenological research},
  volume={8},
  number={1},
  pages={133--148},
  year={1947},
  publisher={JSTOR}
}

@inproceedings{ganesh2025systemizing,
  title={Systemizing Multiplicity: The Curious Case of Arbitrariness in Machine Learning},
  author={Ganesh, Prakhar and Taik, Afaf and Farnadi, Golnoosh},
  booktitle={Proceedings of the AAAI/ACM Conference on AI, Ethics, and Society},
  volume={8},
  number={2},
  pages={1032--1048},
  year={2025}
}

@inproceedings{ribeiro2016should,
  title={" Why should I trust you?" Explaining the predictions of any classifier},
  author={Ribeiro, Marco Tulio and Singh, Sameer and Guestrin, Carlos},
  booktitle={Proceedings of the 22nd ACM SIGKDD international conference on knowledge discovery and data mining},
  pages={1135--1144},
  year={2016}
}

@article{kusner2017counterfactual,
  title={Counterfactual fairness},
  author={Kusner, Matt J and Loftus, Joshua and Russell, Chris and Silva, Ricardo},
  journal={Advances in neural information processing systems},
  volume={30},
  year={2017}
}

@article{wachter2017counterfactual,
  title={Counterfactual explanations without opening the black box: Automated decisions and the GDPR},
  author={Wachter, Sandra and Mittelstadt, Brent and Russell, Chris},
  journal={Harv. JL \& Tech.},
  volume={31},
  pages={841},
  year={2017},
  publisher={HeinOnline}
}

@Article{openml,
    title = {{OpenML}: Networked Science in Machine Learning},
    author = {Joaquin Vanschoren and Jan {N. van Rijn} and Bernd Bischl
      and Luis Torgo},
    journal = {SIGKDD Explorations},
    volume = {15},
    year = {2013},
    pages = {49--60},
    doi = {10.1145/2641190.2641198},
    url = {https://dx.doi.org/10.1145/2641190.2641198},
  }

@misc{south_german_credit_573,
  title        = {{South German Credit}},
  year         = {2020},
  howpublished = {UCI Machine Learning Repository},
  note         = {{DOI}: https://doi.org/10.24432/C5QG88}
}

@article{scikit-learn,
  title={Scikit-learn: Machine Learning in {P}ython},
  author={Pedregosa, F. and Varoquaux, G. and Gramfort, A. and Michel, V.
          and Thirion, B. and Grisel, O. and Blondel, M. and Prettenhofer, P.
          and Weiss, R. and Dubourg, V. and Vanderplas, J. and Passos, A. and
          Cournapeau, D. and Brucher, M. and Perrot, M. and Duchesnay, E.},
  journal={Journal of Machine Learning Research},
  volume={12},
  pages={2825--2830},
  year={2011}
}

\newpage
\onecolumn
\appendix

\section{Philosophical Glossary}
\label{app:philGloss}
\begin{table}[ht]
\centering
\label{tab:glossary}
\arrayrulecolor{gray} 
\begin{tabularx}{\textwidth}{@{} l X @{}}
    \toprule[1.5pt]
    \textbf{Term} & \textbf{Conceptual Definition in the Context of Contestation} \\
    \midrule[1pt]
    \textbf{Epistemic} & Relating to knowledge, justification, and the degree of rational belief.  \\
    \addlinespace
    \textbf{Intuitions} & Rules of thumb that humans rely on in belief formation and action, for example, monotonicity or continuity rules. \\
    \addlinespace
    \textbf{Ethical} & Refers to the specific maxims or principles to which a decision-maker commits in guiding their actions and policies. The term does not imply moral justification. \\
    \addlinespace
    \textbf{Inductive Risk} & The risk of error involved in accepting or rejecting a hypothesis. In lending, this is the risk of misclassifying a ``good'' loan applicant as ``bad'' based on incomplete evidence. \\
    \addlinespace
    \textbf{Normative} & Relating to an evaluative standard or ``ought'' statement. \\
    \addlinespace
    \textbf{Principle of Total Evidence} & An epistemic meta-principle stating that beliefs and decisions should always be based on all available evidence. \\
    \addlinespace
    \textbf{Underdetermination} & A state where multiple jointly incompatible models align equally well with both the observed data and relevant domain knowledge, making it difficult to choose among them. \\
    \bottomrule[1.5pt]
\end{tabularx}
\end{table}

\section{Proofs}
\label{app:proofs}
\begin{theorem}
\label{theo:Epistemic-weakly}
Assume $\tilde{x}^{(i)}=x^{(i)}$. Then, two logical relations can be shown:
\begin{enumerate}[i)]
    \item If the decision is epistemically contestable, then the model is somewhere contestable.
    \item If the decision is somewhere contestable, then the model is somewhere inaccurate.
\end{enumerate} 
\end{theorem}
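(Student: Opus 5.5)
Both implications follow directly from the definitions, so I will unfold Definition~\ref{def:contestable} and Definition~\ref{def:somewhere} and choose a witness point in the $\epsilon$-ball.

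For i), assume the decision is epistemically contestable, i.e.\ $\hat{d}(\tilde{x}^{(i)})\neq d(x^{(i)})$. By the standing assumption $\tilde{x}^{(i)}=x^{(i)}$, this reads $\hat{d}(x^{(i)})\neq d(x^{(i)})$. I take the witness $x:=\tilde{x}^{(i)}=x^{(i)}$. Since $\|x-\tilde{x}^{(i)}\|=0<\epsilon$ for every $\epsilon>0$, we have $x\in\mathcal{B}_{\epsilon}(\tilde{x}^{(i)})$. This point satisfies $\hat{d}(x)\neq d(x)$, which is exactly somewhere contestability. The assumption $\tilde{x}^{(i)}=x^{(i)}$ is what makes the measured and true arguments coincide. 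Without it, the epistemic error would be located at a different input than the one the model actually receives.

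For ii), let $x\in\mathcal{B}_{\epsilon}(\tilde{x}^{(i)})$ with $\hat{d}(x)\neq d(x)$. Both decisions are thresholdings of a probability at the same $\tau$: $\hat{d}(x)=\mathbf{1}_{\hat{p}(x)>\tau}$ and $d(x)=\mathbf{1}_{p(x)>\tau}$. I argue by contraposition. If $\hat{p}(x)=p(x)$, then the two indicators coincide, so $\hat{d}(x)=d(x)$. Hence $\hat{d}(x)\neq d(x)$ forces $\hat{p}(x)\neq p(x)$. More concretely, one of $\hat{p}(x)>\tau\geq p(x)$ or $p(x)>\tau\geq\hat{p}(x)$ holds. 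The same point $x$ in the ball therefore witnesses that $\hat{p}$ is somewhere inaccurate.

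There is no real obstacle. The only point needing care is that both notions must use the same ball $\mathcal{B}_{\epsilon}(\tilde{x}^{(i)})$ with the same $\epsilon$, and that the center lies in the open ball. I will state this explicitly so the hierarchy in \Cref{tab:levels} is justified for every $\epsilon>0$.
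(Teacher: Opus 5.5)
Your proposal is correct and follows the paper's proof essentially step for step. For i) you take $\tilde{x}^{(i)}$ itself as the witness in $\mathcal{B}_{\epsilon}(\tilde{x}^{(i)})$, using distance $0<\epsilon$; for ii) you observe that decisions thresholded at the same $\tau$ can only differ if $\hat{p}(x)\neq p(x)$ at that same point, which is exactly the paper's two-case argument.
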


\begin{proof}
    We start with i), assume that the decision about individual $i$ is epistemically contestable, then $\hat{d}(\tilde{x}^{(i)})\neq d(x^{(i)})$. Since $\tilde{x}^{(i)}=x^{(i)}$, we know that $\hat{d}(\tilde{x}^{(i)})\neq d(\tilde{x}^{(i)})$. Now because $\|\tilde{x}^{(i)}-\tilde{x}^{(i)}\|=0$, we can infer that there exists an $x\in \mathcal{B}_{\epsilon}(\tilde{x}^{(i)}): \hat{d}(x)\neq d(x)$, i.e. the decision is weakly epistemically contestable.

    Consider ii), assume that the decision about individual $i$ is somewhere contestable. This means there exists an $x\in \mathcal{B}_{\epsilon}(\tilde{x}^{(i)}): \hat{d}(x)\neq d(x)$. Consider a particular $x'\in\mathcal{B}_{\epsilon}(\tilde{x}^{(i)}): \hat{d}(x')\neq d(x')$. Then, either $\hat{p}(x') > \tau$ and $p(x') \leq \tau$ or $\hat{p}(x') \leq \tau$ and $p(x') > \tau$. In both cases, this implies $p(x')\neq\hat{p}(x')$ and because $x'\in\mathcal{B}_{\epsilon}(\tilde{x}^{(i)})$, together, we can infer somewhere inaccuracy.
\end{proof}

\retheocount*

\begin{proof}
    Let $i$ and $i_c$ be individuals to whom the continuity rule applies, then $d(\tilde{x}^{(i)})=d(\tilde{x}^{(i_c)})$.  Since $\tilde{x}^{(i_c)}$ is the counterfactual to $\tilde{x}^{(i)}$, we know that $\hat{d}(\tilde{x}^{(i)})=0$ and $\hat{d}(\tilde{x}^{(i_c)})=1$ or $\hat{d}(\tilde{x}^{(i)})=1$ and $\hat{d}(\tilde{x}^{(i_c)})=0$. 

    There now can be two cases: (I) $\hat{d}(\tilde{x}^{(i)})\neq d(\tilde{x}^{(i)})$; (II) $\hat{d}(\tilde{x}^{(i)})= d(\tilde{x}^{(i)})$.

    If (I), then the decision is epistemically contestable and hence also somewhere contestable as shown in Theorem \ref{theo:Epistemic-weakly}.
    
    If (II), then using $\hat{d}(\tilde{x}^{(i)})= d(\tilde{x}^{(i)})$ together with the continuity rule, we can infer that $\hat{d}(\tilde{x}^{(i)})=d(\tilde{x}^{(i_c)})$. Since we know that $\tilde{x}^{(i_c)}$ is a counterfactual receiving a different decision than $\tilde{x}^{(i)}$, we can infer that $\hat{d}(\tilde{x}^{(i_c)})\neq d(\tilde{x}^{(i_c)})$. Since we additionally know that their distance is smaller than $\epsilon$, we can infer that the decision is somewhere contestable.
\end{proof}

\retheolime*
\begin{proof}
    We will constrain our proof to the case where the monotonicity rule applies positively to feature $k$ and that the LIME explainer yields $w_k<0$, the reverse case works analogously.

    Our proof strategy is to show this theorem by contraposition, i.e. we show that if the model is not somewhere inaccurate, then the local faithfulness must be greater than or equal to $\mathbb{V}_{\pi_k}[w_k \pi_k]$. 
    
    Assume that the model is not somewhere inaccurate. Then, for all $x\in\mathcal{X}$ with $\|x-\tilde{x}^{(i)}\|<\epsilon$ holds $\hat{p}(x)=p(x)$. Since $\mathbb{P}_{\pi}(|\pi-\tilde{x}^{(i)}|>\epsilon)=0$, this implies $\mathbb{P}_{\pi}(\hat{p}(\pi)=p(\pi))=1$. 
    
    We denote $\pi_{-k}:=(\pi_1,\dots,\pi_{k-1},\pi_{k+1},\dots,\pi_r)$ and obtain 
    \begin{align*}
        \text{LF}(g,\hat{p})&=\mathbb{E}_{\pi}[(g(\pi)-\hat{p}(\pi))^2]\\
        &\overset{\text{(i)}}=\mathbb{E}_{\pi}[(g(\pi)-p(\pi))^2]\\
        &\overset{\text{(ii)}}=\mathbb{E}_{\pi_k}[\mathbb{E}_{\pi_{-k}\mid \pi_k}[(g(\pi_{-k},\pi_k)-p(\pi_{-k},\pi_k))^2\mid \pi_k]]\\
        &\overset{\text{(iii)}}=\mathbb{E}_{\pi_k}[\mathbb{E}_{\pi_{-k}}[(g(\pi_{-k},\pi_k)-p(\pi_{-k},\pi_k))^2]]\\
        &\overset{\text{(iv)}}\geq\mathbb{E}_{\pi_k}[\mathbb{E}_{\pi_{-k}}[g(\pi_{-k},\pi_k)-p(\pi_{-k},\pi_k)]^2]\\
        &\overset{\text{(v)}}=\mathbb{E}_{\pi_k}[(w_k\pi_k-\mathbb{E}_{\pi_{-k}}[p(\pi_{-k},\pi_k)-\sum\limits_{\substack{i=1\\ i\neq k}}^rw_i\pi_i])^2]\\
        &\overset{\text{(vi)}}\geq \underset{p'\text{ positive monotonic in }k}{\text{min}}\;\mathbb{E}_{\pi_k}[(w_k\pi_k-\mathbb{E}_{\pi_{-k}}[p(\pi_{-k},\pi_k)-\sum\limits_{\substack{i=1\\ i\neq k}}^rw_i\pi_i])^2]\\
        &\overset{\text{(vii)}}\geq \underset{q'\text{ positive monotonic}}{\text{min}}\;\mathbb{E}_{\pi_k}[(w_k\pi_k-q'(\pi_k))^2]\\
        &\overset{\text{(viii)}}= \underset{c\in \mathbb{R}}{\text{min}}\;\mathbb{E}_{\pi_k}[(w_k \pi_k-c)^2]\\
        &\overset{\text{(ix)}}= \mathbb{E}_{\pi_k}[(w_k \pi_k-\mathbb{E}[w_k \pi_k])^2]\\
        &= \mathbb{V}_{\pi_k}[w_k \pi_k]
    \end{align*}
     In the derivation, we used the following: 
     \begin{enumerate}[(i)]
         \item $\mathbb{P}_{\pi}(\hat{p}(\pi)=p(\pi))=1$, which we obtained through the somewhere inaccuracy assumption. 
         \item The tower rule for expectations.
         \item The independence of $\pi_k$ and $\pi_{-k}$.
         \item Jensen's inequality.
         \item Linearity of expectation value.
         \item By assumption, $p$ is positive monotonic in $k$. The term is therefore greater than or equal to the minimum over all functions that are positive monotonic in $k$.
         \item We can define the function $q(\pi_k):=\mathbb{E}_{\pi_{-k}}[(p(\pi_{-k},\pi_k)-\sum\limits_{\substack{i=1\\ i\neq k}}^rw_i\pi_i]$. We find that if $p$ is positive monotonic in $k$, then also $q$ is monotonically increasing in $k$. Let $z_1\leq z_2$, then 
         \begin{align*}
         p(\pi_{-k},z_1)&\leq p(\pi_{-k},z_2) \Rightarrow p(\pi_{-k},z_1)-\sum\limits_{\substack{i=1\\ i\neq k}}^rw_i\pi_i\leq p(\pi_{-k},z_2)-\sum\limits_{\substack{i=1\\ i\neq k}}^rw_i\pi_i\\ 
         &\Rightarrow\mathbb{E}_{\pi_{-k}}[p(\pi_{-k},z_1)-\sum\limits_{\substack{i=1\\ i\neq k}}^rw_i\pi_i]\leq \mathbb{E}_{\pi_{-k}}[p(\pi_{-k},z_2)-\sum\limits_{\substack{i=1\\ i\neq k}}^rw_i\pi_i]\\
         &\Leftrightarrow q(z_1)\leq q(z_2).
         \end{align*} 
         \item The smallest approximating positive monotonic unary function to a negative monotonic unary function is a constant. 
         \item For an arbitrary function, the constant with the smallest distance according to the mean squared error is the expected value of the function \citep{hastie2009elements}.
     \end{enumerate}
     
\end{proof}

\retheoanchor*

\begin{proof}We prove this for the case when the reasons rule $A$ implies a positive decision and the anchor implies a negative decision, the reverse case works analogously.

Since $B$ is an anchor with precision $1-\delta$, we know that $\mathbb{P}_{\pi_x}(\hat{d}(\pi_x)=0\mid B)=1-\delta$. By the Kolmogorov axioms, we can infer $\mathbb{P}_{\pi_x}(\hat{d}(\pi_x)=1\mid B)=\delta$. Due to our reasons rule, we know that $\mathbb{P}_{\pi_x}(d(\pi_x)=1\mid A)=1$.

We now prove our theorem by contraposition and assume that the model is not somewhere contestable. Then, we can infer
    \begin{align*}
        \delta&=\mathbb{P}_{\pi_x}(\hat{d}(\pi_x)=1\mid B)=\frac{\mathbb{P}_{\pi_x}(\hat{d}(\pi_x)=1\cap B)}{\mathbb{P}_{\pi_x}(B)}\\
        &\overset{\text{(i)}}\geq \frac{\mathbb{P}_{\pi_x}(\hat{d}(\pi_x)=1\cap B\cap A)}{\mathbb{P}_{\pi_x}(B)}\\
        &\overset{\text{(ii)}}{=}\frac{\mathbb{P}_{\pi_x}(d(\pi_x)=1\cap B\cap A)}{\mathbb{P}_{\pi_x}(B)}\\
        &\overset{\text{(iii)}}{=}\frac{\mathbb{P}_{\pi_x}( B\cap A)}{\mathbb{P}_{\pi_x}(B)}\\
        &\overset{\text{(iv)}}\geq \mathbb{P}_{\pi_x}( B\cap A).
    \end{align*}
    In the derivation we used the following:
    \begin{enumerate}[(i)]
        \item Adding more sets in the intersection in the probability term of the numerator only reduces the probability.
        \item  $\forall x\in \mathcal{B}_{\epsilon}(x^{(i)}): d(x)=\hat{d}(x)$ by the assumption that the model is not somewhere contestable and we know that $\pi$ is constrained by $\mathbb{P}_{\pi}(|\pi-\tilde{x}^{(i)}|>\epsilon)=0$, thus $\mathbb{P}_{\pi}(d(\pi)=\hat{d}(\pi))=1$.
        \item  It holds that $\mathbb{P}_{\pi_x}(d(\pi_x)=1\mid A)=1$ because the reasons rule applies, which implies that $\mathbb{P}_{\pi_x}(d(\pi_x)=1\cap B\cap A)=\mathbb{P}_{\pi_x}( B\cap A)$.
        \item As a probability, $\mathbb{P}_{\pi_x}(B)$ always lies between $0$ and $1$, and by our assumption is strictly greater than $0$. Dividing by such a term can only increase the value of the fraction.
    \end{enumerate}

\end{proof}

\retheorashomon*

\begin{proof}
First, we show that the bound holds. We know that $d(x^{(i)})\in\lbrace 0,1\rbrace$, thus, we prove this by distinguishing two cases. (I) $d(x^{(i)})=0$, then 
\begin{equation*}
    \mathbb{P}_{H_{D,\delta}}(d(x^{(i)})\neq\hat{d}(x^{(i)}))=
    \mathbb{P}_{H_{D,\delta}}(\hat{d}(x^{(i)})=1)\geq \text{min}\lbrace \mathbb{P}_{H_{D,\delta}}(\hat{d}(x^{(i)})=1), \mathbb{P}_{H_{D,\delta}}(\hat{d}(x^{(i)})=0)\rbrace.
\end{equation*}
(II) $d(x^{(i)})=1$, then 
\begin{equation*}
    \mathbb{P}_{H_{D,\delta}}(d(x^{(i)})\neq\hat{d}(x^{(i)}))=
    \mathbb{P}_{H_{D,\delta}}(\hat{d}(x^{(i)})=0)\geq \text{min}\lbrace \mathbb{P}_{H_{D,\delta}}(\hat{d}(x^{(i)})=1), \mathbb{P}_{H_{D,\delta}}(\hat{d}(x^{(i)})=0)\rbrace.
\end{equation*}
This proves the bound. 

Now we show asymptotic consistency, define
\begin{equation*}
    \hat{\pi}_1:=\frac{1}{|RS_{\hat{p}}|}\sum_{m\in RS_{\hat{p}}}\mathbf{1}_{\hat{d}_m(x^{(i)})=1},
    \qquad
    \hat{\pi}_0:=\frac{1}{|RS_{\hat{p}}|}\sum_{m\in RS_{\hat{p}}}\mathbf{1}_{\hat{d}_m(x^{(i)})=0}.
\end{equation*}
Then
\begin{equation*}
    \hat{\theta}_{RS_{\hat{p}}}=\min\{\hat{\pi}_1,\hat{\pi}_0\}.
\end{equation*}

By the Law of Large Numbers, $\hat{\pi}_1$ and $\hat{\pi}_0$ converge in probability as $|RS_{\hat{p}}|\to\infty$, i.e.
\begin{equation*}
    \hat{\pi}_1 \xrightarrow[]{p} \mathbb{P}_{H_{D,\delta}}(\hat{d}(x^{(i)})=1),
    \qquad
    \hat{\pi}_0 \xrightarrow[]{p} \mathbb{P}_{H_{D,\delta}}(\hat{d}(x^{(i)})=0).
\end{equation*}
Since the minimum operator is continuous, the Continuous Mapping Theorem \citep{van2000asymptotic} implies
\begin{equation*}
    \hat{\theta}_{RS_{\hat{p}}}
    =\min\{\hat{\pi}_1,\hat{\pi}_0\}
    \xrightarrow[]{p}
    \min\{\mathbb{P}_{H_{D,\delta}}(\hat{d}(x^{(i)})=1),
    \mathbb{P}_{H_{D,\delta}}(\hat{d}(x^{(i)})=0)\}
    =\theta.
\end{equation*}
Hence, $\hat{\theta}_{RS_{\hat{p}}}$ is asymptotically consistent.

Now we show that the proposed estimator is biased downwards, i.e. $\mathbb{E}_{{H_{D,\delta}}}[\hat{\theta}_{RS_{\hat{p}}}]\leq\theta$:
    \begin{align*}
        \mathbb{E}_{H_{D,\delta}}[\hat{\theta}_{RS_{\hat{p}}}]&=\mathbb{E}_{{H_{D,\delta}}}[\text{min}\lbrace\frac{1}{|RS_{\hat{p}}|}\sum\limits_{m\in RS_{\hat{p}}}\mathbf{1}_{\hat{d}_m(x^{(i)})=1},\frac{1}{|RS_{\hat{p}}|}\sum\limits_{m\in RS_{\hat{p}}}\mathbf{1}_{\hat{d}_m(x^{(i)})=0}\rbrace]\\
        &\overset{*}\leq\text{min}\lbrace\mathbb{E}_{{H_{D,\delta}}}[\frac{1}{|RS_{\hat{p}}|}\sum\limits_{m\in RS_{\hat{p}}}\mathbf{1}_{\hat{d}_m(x^{(i)})=1}],\mathbb{E}_{{H_{D,\delta}}}[\frac{1}{|RS_{\hat{p}}|}\sum\limits_{m\in RS_{\hat{p}}}\mathbf{1}_{\hat{d}_m(x^{(i)})=0}]\rbrace\\
        &=\text{min}\lbrace\frac{1}{|RS_{\hat{p}}|}\sum\limits_{m\in RS_{\hat{p}}}\mathbb{E}_{{H_{D,\delta}}}[\mathbf{1}_{\hat{d}_m(x^{(i)})=1}],\frac{1}{|RS_{\hat{p}}|}\sum\limits_{m\in RS_{\hat{p}}}\mathbb{E}_{{H_{D,\delta}}}[\mathbf{1}_{\hat{d}_m(x^{(i)})=0}]\rbrace\\
        &\overset{**}=\text{min}\lbrace\mathbb{E}_{{H_{D,\delta}}}[\mathbf{1}_{\hat{d}(x^{(i)})=1}],\mathbb{E}_{{H_{D,\delta}}}[\mathbf{1}_{\hat{d}(x^{(i)})=0}]\rbrace\\
        &=\text{min}\lbrace \mathbb{P}_{H_{D,\delta}}(\hat{d}(x^{(i)})=1), \mathbb{P}_{H_{D,\delta}}(\hat{d}(x^{(i)})=0)\rbrace\\
        &=\theta.
    \end{align*}
    
    In $*$, we applied Jensen's inequality, and in $**$, we used that all Rashomon models share the same expected value as $\hat{p}$ over $H_{D,\delta}$.
\end{proof}

\retheoepcontest*

\begin{proof}
    We know that $\hat{d}(\tilde{x}^{(i)})\neq\hat{d}(x^{(i)})$. Since the decision model $\hat{d}$ is epistemically reliable, we can infer that $d(x^{(i)})=\hat{d}(x^{(i)})$. Thus, we can infer that $\hat{d}(\tilde{x}^{(i)})\neq d(x^{(i)})$.
\end{proof}

\retheofriendly*

\begin{proof}
    Before proving the theorem, we define a function that will be useful in the proof:
    \begin{equation*}
        q': \mathcal{X}\times E\rightarrow [0,1]; (x,e)\mapsto \mathbb{P}(Y=1\mid X=x).
    \end{equation*}
    Now, we can prove the theorem:
    \begin{align*}
        \mathbb{E}_{Y,X,E}[d^*\neq d_{X,E}]&=\mathbb{E}_{Y,X,E}[d^*\neq \mathbf{1}_{p_{X,E}(X,E)>\tau}]\\
        &\overset{\text{(i)}}=\underset{q}{\text{min}}\;\mathbb{E}_{Y,X,E}[d^*\neq \mathbf{1}_{q(X,E)>\tau}]\\
        &\overset{\text{(ii)}}\leq\mathbb{E}_{Y,X,E}[d^*\neq \mathbf{1}_{q'(X,E)>\tau}]\\
        &\overset{\text{(iii)}}=\mathbb{E}_{Y,X}[\mathbb{E}_{E\mid Y,X}[d^*\neq \mathbf{1}_{q'(X,E)>\tau}\mid X,Y]]\\
        &\overset{\text{(iv)}}=\mathbb{E}_{Y,X}[\mathbb{E}_{E\mid Y,X}[d^*\neq \mathbf{1}_{p_X(X)>\tau}\mid X,Y]]\\
        &\overset{\text{(v)}}=\mathbb{E}_{Y,X}[d^*\neq \mathbf{1}_{p_X(X)>\tau}]\\
        &= \mathbb{E}_{Y,X}[d^*\neq d_{X}].
    \end{align*}

        In the derivation we used the following:
    \begin{enumerate}[(i)]
        \item $p_{X,E}$ is Bayes optimal and thus describes the function with minimal error in expectation over $Y,X,E$ mapping from $\mathcal{X}\times\mathcal{E}$ to $[0,1]$. 
        \item The minimum over all functions $q$ is smaller than any particular function $q'$.
        \item  The tower rule for expectations.
        \item By definition, $\mathbf{1}_{q(X,E)>\tau}=\mathbf{1}_{p_X(X)>\tau}$.
        \item $d^*\neq \mathbf{1}_{p_X(X)>\tau}$ is a random variable that is measurable with respect to the $\sigma-$algebra generated by $Y,X$. Therefore it can be drawn out of the conditional expectation value. 
    \end{enumerate}
\end{proof}

\section{Counterexamples}
\label{app:counterexamples}
For all counterexamples, we work with the same general setup. The decision threshold is given by $\tau=0.5$, the locality is defined as $\epsilon=0.1$, and the distance norm applied is the absolute distance $|\cdot|$. Moreover, for all counterexamples, we rely on the same data generating process and prediction model.

\textbf{Data generating process:} Let the data generating process be defined by $(X,Y)$, with $X$ a one-dimensional random variable uniformly distributed over the interval $[0,1]$ and $Y$ be defined as follows:
\begin{equation*}
    Y=\begin{cases}
        1 \qquad\text{ if } X+\mathcal{N}\geq 1\\
        0 \qquad\text{ else,}
    \end{cases}
\end{equation*}
where $\mathcal{N}$ is a noise term defined by a uniform distribution over the interval $[0,1]$. Then, $p(x)=\mathbb{P}(Y=1\mid X=x)=x$. Since $\tau=0.5$, the epistemically correct decision is given by:
\begin{equation*}
    d(x^{(i)})=\begin{cases}
        1 \qquad\text{ if } x^{(i)}\geq 0.5\\
        0 \qquad\text{ else.}
    \end{cases}
\end{equation*}

\textbf{Prediction model:} Let the prediction model be defined by:
\begin{equation*}
    \hat{p}(x):=\begin{cases}
        2x \qquad\text{ if }x<0.5\\
        2-2x \;\text{ else.}
    \end{cases}
\end{equation*}
Then, the actual decision is given by:
\begin{equation*}
    \hat{d}(x):=\begin{cases}
        1 \qquad\text{ if }0.25<x\leq 0.75\\
        0 \qquad\text{ else.}
    \end{cases}
\end{equation*}

Now, we have everything set up to construct three counterexamples. For each of the respective theorems, i.e.  \Cref{theo:Count}, \Cref{theo:LIME}, and \Cref{theo:anchor}, we construct cases where the conditions of the respective theorem are satisfied but the decision is not epistemically contestable.

\begin{counterexample}
    \label{ce:counterfactuals}
    Consider an individual with $x^{(i)}=0.2$ where the features are not subject to measurement error, i.e. $\tilde{x}^{(i)}=x^{(i)}$. Then, the individual receives a negative decision since $\hat{d}(0.2)=0$. A potential counterfactual within an $\epsilon=0.1$ environment is $x^{(i_c)}=0.25$ because $|0.2-0.25|<\epsilon$ and $\hat{d}(0.25)=1$. The continuity rule applies for the individual and the counterfactual since $d(0.2)=d(0.25)=0$.

    Consistent with \Cref{theo:Count}, the decision is somewhere contestable since the counterfactual lies in an $\epsilon$ environment around $x^{(i)}$ and $\hat{d}(0.25)=1\neq 0=d(0.25)$. 
    
    However, the decision itself is epistemically correct since $\hat{d}(0.2)=d(0.2)=0$.
\end{counterexample}

\begin{counterexample}
    \label{ce:LIME}
    Consider an individual with $x^{(i)}=0.6$ where the features are not subject to measurement error, i.e. $\tilde{x}^{(i)}=x^{(i)}$. Then the optimal LIME explanation for $\epsilon=0.1$ would be $g(x)=2-2x$. Note, that the LIME explanation is locally perfectly faithful, i.e. $\text{LF}(g,\hat{p})=0$. In our example, the monotonicity rule applies positively in $x$, that means for any two values with $x\geq x'$, holds $p(x)\geq p(x')$.

    As shown in \Cref{theo:LIME}, we do find a somewhere inaccuracy, for example, $x=0.55$ lies within an $\epsilon$ environment around $x^{(i)}$ and it holds that $\hat{p}(0.55)=0.9\neq 0.55=p(0.55)$. 
    
    However, even in the given local environment, we do not find an epistemically incorrect decision since $\forall x \in \mathcal{B}_{0.1}(0.6): \hat{d}(x)=d(x)=1$.
\end{counterexample}

\begin{counterexample}
    \label{ce:anchors}
    Consider an individual with $x^{(i)}=0.55$ where the features are not subject to measurement error, i.e. $\tilde{x}^{(i)}=x^{(i)}$. The individual receives a positive decision since $\hat{d}(0.55)=1$. Let $B=[0.25,0.75[$, this constitutes an anchor with perfect precision $\text{prec}(B)=1$ because in this interval, the model always decides positively. Let $A=[0,0.5[$ be the domain of the reason rule for a negative decision. The reason rule applies as for any $x\in A: d(x)=0$. Also, $\mathbb{P}_{\pi_{x^{(i)}}}(A\cap B)=1>0=\delta$.

    As shown in \Cref{theo:anchor}, we do find a decision in the neighborhood that is epistemically contestable, for example, $x=0.49$ lies within an $\epsilon$ environment around $x^{(i)}$ and it holds that $\hat{d}(0.49)=1\neq 0=d(0.49)$.

    However, for the individual in question, the decision is epistemically correct since $\hat{d}(0.55)=d(0.55)=1$.
\end{counterexample}

\section{Details Figure \ref{fig:violations}}
\label{app:details-examples}

All code needed to reproduce the results in Figure \ref{fig:violations} will be made publicly available upon publication; an anonymized version is made available to the reviewers (link in footnote).\footnote{\url{https://anonymous.4open.science/r/contestability-code-334B/README.md}}
The examples were coded in Python $3.11.7$ and run on an M3 Pro Chip. Running the code takes less than one hour.
Details on packages and versions are provided in the code repository.

\paragraph{Dataset}{
For the examples, we used the German Credit dataset (available at the UCI Machine Learning Repository \citep{south_german_credit_573}, obtained from OpenML \citep{openml}). It contains $1000$ samples with $20$ features (a mix of categorical and numerical variables such as requested credit amount, duration, employment status, etc.). The prediction target is binary credit risk classification: Good Credit ($1$) vs Bad Credit ($0$), with a class distribution of $700$ good and $300$ bad cases. All categorical features were converted to integers via LabelEncoder during preprocessing.
}

\paragraph{Models}{
The examples are based on two models: A deep decision tree for LIME and counterfactuals, and a shallow decision tree for anchors. 
Both the shallow and deep decision trees were fit using scikit-learn \citep{scikit-learn}.
The data was split into 80\% training and 20\% test sets using stratified sampling. Categorical features were encoded using label encoding prior to training. The \emph{shallow decision tree} was constrained to a maximum depth of $3$, while the \emph{deep decision tree} was grown without depth constraints (allowing it to fully expand). Both models used default values for \texttt{min\_samples\_split} (2) and \texttt{min\_samples\_leaf} (1). The shallow tree was designed to produce simple, human-interpretable decision rules, whereas the unconstrained deep tree serves as a baseline representing a more complex, overfitting model.
The shallow decision tree (max\_depth=3) achieves an accuracy of 73.0\% with a false negative rate of 7.1\%, while the deep decision tree (max\_depth=16) achieves an accuracy of 68.0\% with a false negative rate of 26.4\%.
}

We computed three explanation methods on these examples: counterfactuals, LIME, and Anchors. The results are visualized in the main paper in Figure \ref{fig:violations}.

\paragraph{Counterfactual}{
Counterfactual explanations were generated using a brute force sparse counterfactual search. 
That is, we exhaustively search over one-dimensional changes to the explained instance. The values are taken from the empirical distribution of each feature in the data. 
We return the counterfactual with the minimal normalized Euclidean distance, which is the Euclidean distance after normalizing the features.
This method finds the closest one-dimensional counterfactual.\\
We computed counterfactual explanations for all rejected applicants and report the one with the closest counterfactual. The concerned applicant is recommended to increase the requested amount by just $38$ from $1344$ to $1382$, which flips the model's decision. The normalized Euclidean distance is only $0.0137$, meaning that the counterfactual is only $0.0137$ standard deviations away from the factual.
We report all feature values in the code repository.\\
We note that the counterfactual does not only expose a continuity conflict, like LIME below it, also exposes a monotonicity conflict: Increasing the requested credit amount leads to a more favorable decision.
}

\paragraph{LIME}{
LIME explanations were generated using the lime package and the tabular explainer in classification mode, with Chebyshev distance and kernel width $0.5$.\\
We compute LIME for the same applicant as in the counterfactual example and find that LIME attributes a positive weight of $0.0614$ to the requested credit amount.
According to Theorem \ref{theo:LIME}, a negative relationship but positive LIME weight indicates a monotonicity conflict.
From the counterfactual explanation we already know that the model rejects the applicant but accepts them when the requested credit amount is increased by $38$, confirming our analysis.
}

\paragraph{Anchor}{
The model anchor was extracted directly from the decision path of the shallow decision tree. For a given instance, the anchor consists of the conjunction of all splitting rules encountered along the path from the root to the leaf node that determines the prediction. For the instance at test index 1 (a false negative where the model predicts ``bad credit'' but the true label is ``good credit''), the decision path yields the following rule: $\texttt{checking\_status} \leq 1.5$ and $\texttt{duration} > 31.5$. This anchor has a support of 83 training samples, with the model predicting the negative class for 100\% of these samples. As the reason rule, we consider: Applicants with $\texttt{duration} > 31$ and $\texttt{savings\_status} \geq 3$ should be accepted. The rule has a support of $29$ training samples. The overlap between the model anchor and the prior knowledge rule contains $10$ training samples. Thus, there is a clear conflict between the rule and the anchor.
}

\end{document}